\definecolor{greenHighlight}{rgb}{0.7,1,0.7} 
\newcommand{\model}{\textsc{Llama 2 Long}\xspace}
\newcommand{\modelchat}{\textsc{Llama 2 Long Chat}\xspace}
\newcommand{\llamavtwo}{\textsc{Llama~2}\xspace}
\newcommand{\llama}{\textsc{Llama}\xspace}
\newcommand{\chatllama}{\textsc{Llama~2~Chat}\xspace}
\newcommand\extrafootertext[1]{%
    \bgroup
    \renewcommand\thefootnote{\fnsymbol{footnote}}%
    \renewcommand\thempfootnote{\fnsymbol{mpfootnote}}%
    \footnotetext[0]{#1}%
    \egroup
}
\newtheorem{theorem}{Theorem}
\title{Effective Long-Context Scaling of Foundation Models}
\author{Wenhan Xiong$^{\dagger\ast}$, 
 Jingyu Liu$^\dagger$, Igor Molybog,\\[0.2cm]
\textbf{Hejia Zhang, Prajjwal Bhargava, Rui Hou, Louis Martin, Rashi Rungta,} \\ \textbf{Karthik Abinav Sankararaman, Barlas Oguz, Madian Khabsa, Han Fang,} \\
\textbf{Yashar Mehdad, Sharan Narang, Kshitiz Malik, Angela Fan,} \\[0.2cm]
 \textbf{Shruti Bhosale, Sergey Edunov, Mike Lewis, Sinong Wang$^\ast$, Hao Ma$^\ast$}\\ \\
 \hspace*{0pt} GenAI, Meta
}
\begin{document}

\maketitle

\vspace{-2em}

\begin{abstract}


We present a series of long-context LLMs that support effective context windows of up to 32,768 tokens. 
Our model series
are built through continual pretraining from \llamavtwo with longer training sequences and on a dataset where long texts are upsampled. We perform extensive evaluation on 
language modeling, synthetic context probing tasks, and a wide range of research benchmarks.
On research benchmarks, 
our models achieve consistent improvements on most regular tasks and significant improvements on long-context tasks over \llamavtwo. Notably, with a cost-effective instruction tuning procedure that does not require human-annotated long instruction data,
the 70B variant
can already surpass \texttt{gpt-3.5-turbo-16k}'s overall performance on a suite of long-context tasks. Alongside these results, we provide an in-depth analysis on the individual components of our method. We delve into \llama's position encodings and discuss its limitation in modeling long dependencies. We also examine the impact of various design choices in the pretraining process, including the data mix and the training curriculum of sequence lengths -- our ablation experiments suggest that having abundant long texts in the pretrain dataset is \textit{not} the key to achieving strong performance, and we empirically verify that long context continual pretraining is more efficient and similarly effective compared to pretraining from scratch with long sequences. 






\end{abstract}

\begin{figure}[h!]
    \centering
    \includegraphics[width=0.75\linewidth]{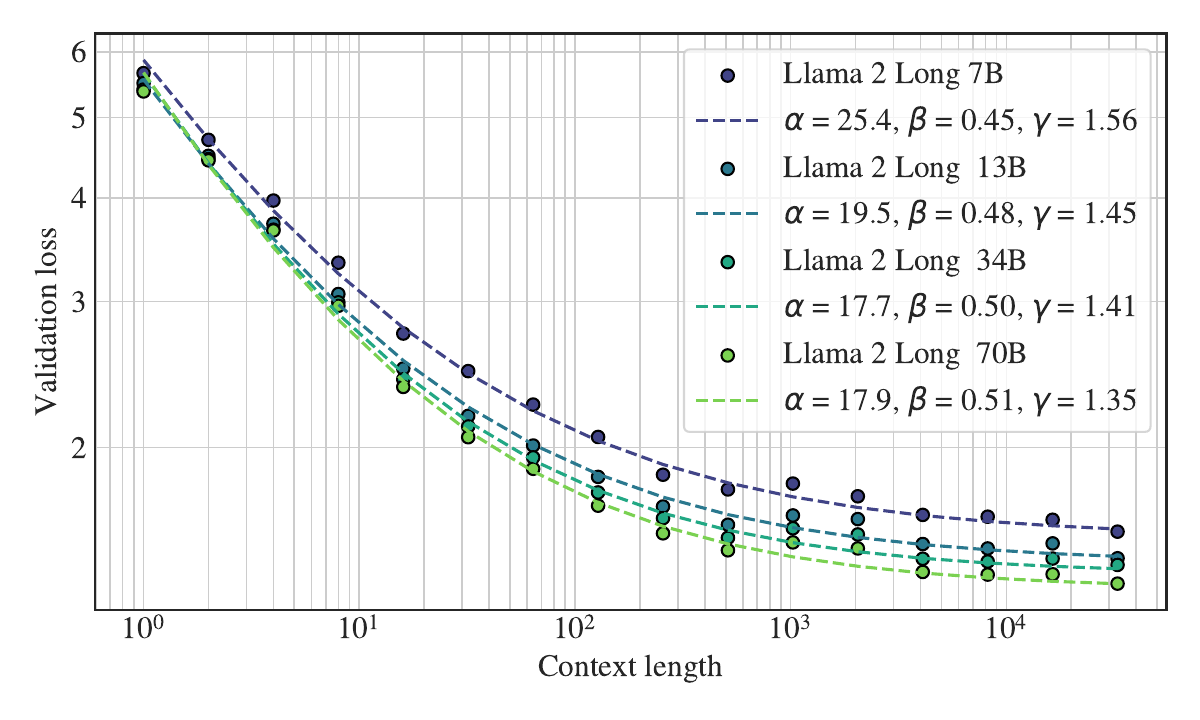}
    \caption{We show that our model's validation loss can be fit as a function of the context length: $L(c) = (\frac{\alpha}{c})^\beta + \gamma$ with a different set of $\alpha, \beta, \gamma$ for each model size. This power-law relationship also suggests that context length is another important axis of scaling LLMs and our model can continually improve its performance as we increase the context length up to 32,768 tokens.} 
    \vspace{-10pt}
    
    \label{fig:scaling_law}
\end{figure}


\section{Introduction}

Large language models (LLMs), trained with an unprecedented magnitude of data and compute, 
hold the promise of fundamentally improving the way we interact with the digital world.
As LLMs get rapidly deployed and continue to evolve through scaling, we envision these models to serve more intricate and complex use cases, such as analyzing dense knowledge-rich documents, powering more genuine and engaging chatbot experiences, and aiding human users in iterative creation processes such as coding and design, etc. A crucial feature supporting this evolution is the ability to effectively process long-context inputs. 

Until now, LLMs with robust long-context capabilities are primarily provided through proprietary LLM APIs~\citep{claude,gpt-4} and there is no open recipe for building long-context model that can
demonstrate on-par downstream performance as these proprietary models. Moreover, existing open-sourced long-context models~\citep{long_llama,pi,passkey,MPT7b} often fall short on evaluations and primarily measure long-context capabilities with the language modeling loss and synthetic tasks, which do not comprehensively demonstrate their effectiveness in diverse, real-world scenarios. Additionally, these models often overlook the necessity of maintaining strong performance on standard short-context tasks, either bypassing the evaluations or reporting degenerated performance~\citep{yarn,pi}. 
\extrafootertext{$^\dagger$ Equal contribution} 
\extrafootertext{$^\ast$ Corresponding authors:\{xwhan, sinongwang, haom\}@meta.com} 




In this work, we describe our approach to build long-context LLMs with superior performance over all existing open-sourced models. We build our models by continually pretraining from \llamavtwo checkpoints with additional 400 billion tokens formed as long training sequences. Among the model series, the smaller 7B/13B variants are trained with 32,768-token sequences while the 34B/70B variants with 16,384-token sequences. In contrast to the limited evaluation performed by existing studies, we extensively evaluate our models using language modeling, synthetic tasks, and also a wide range of real-world benchmarks covering both long and short context tasks. On language modeling, our model demonstrates a clear power-law scaling behavior with respect to context lengths. This scaling behavior, as shown in Figure~\ref{fig:scaling_law}, not only shows our models' ability to consistently benefit from more contexts but also suggest that context length is another importance axis of scaling LLMs. When comparing our models to \llamavtwo on research benchmarks, we not only observe significant improvements on long-context tasks but also modest improvements on standard short-context tasks, especially on coding, math, and knowledge benchmarks. We explored using a simple and cost-effective procedure to instruction finetune our continually pretrained long models without any human-annotated data. The end result is a chat model that can achieve stronger overall performance than \texttt{gpt-3.5-turbo-16k} on a series of long-context benchmarks covering question answering, summarization, and multi-document aggregation tasks.


In the remaining part of this paper, we begin by presenting the continual long-context pretraining approach and a lightweight instruction tuning procedure, followed by detailed results on a range of short and long context tasks. To facilitate future research, we complement our results with an analysis section discussing how the design of positional encodings, the length distribution of the dataset and the training curriculum contributes to the final performance. Finally, we report responsible safety evaluations, which validates that our models can largely maintain the safety performance of the original \llamavtwo series. 

\section{Method}
\label{sec:method}

\subsection{Continual Pretraining}

Training with longer sequence lengths can introduce significant computational overhead due to the quadratic attention calculations. This is the main motivation of our continual pretraining approach. 
The underlying hypothesis that similar long-context capabilities can be learned by continually pretraining from a short-context model is later validated in Section~\ref{sec:curriculum_analysis} 
through comparing different training curricula. 
We keep the original \llamavtwo architecture nearly intact for continual pretraining and only make a necessary modification to the positional encoding that is crucial for the model to attend longer. We also choose not to apply sparse attention~\citep{blocksparse} in this work, since given \llamavtwo 70B's model dimension ($h$ = 8192), the cost of attention matrix calculation and value aggregation only becomes a computation bottleneck when the sequence length exceeds 49,152 ($6h$) tokens~\citep{megatron}.\footnote{While sparse attention might be useful for reducing the key/value cache size at inference time when trading off performance, it can complicate the inference pipeline and the improvements can also be offset by quantization methods.}

\paragraph{Positional Encoding} Through early experiments at the 7B scale, we identified a key limitation of \llamavtwo's positional encoding (PE) that prevents the attention module from aggregating information of distant tokens. We adopt a minimal yet necessary modification on the RoPE positional encoding~\citep{rope} for long-context modeling -- decreasing the rotation angle (controlled by the hyperparameter ``base frequency $b$''), which reduces the decaying effect of RoPE for distant tokens. 
In Section~\ref{pe_analysis}, we show this simple method outperforms a concurrent approach~\citep{pi} for extending \llama's context length and provide a theoretic explanation of its superiority.  

\paragraph{Data Mix} On top of the working model with the modified PE, we further explored different pretrain data mixes in Section~\ref{datamix_analysis} for improving long-context abilities, either by adjusting the ratio of \llamavtwo's pretraining data or adding new long text data. We found that often the quality of the data plays a more critical role than the length of texts for long-context continual pretraining. 

\paragraph{Optimization Details} We continually pretrain \llamavtwo checkpoints with increased sequence length while keeping the same number of tokens per batch as in \llamavtwo. We train all models for a total of 400B tokens over 100,000 steps. 
With \textsc{FlashAttention}~\citep{flashattention}, there is negligible GPU memory overhead as we increase the sequence length and we observe around $17\%$ speed loss when increasing the sequence length from 4,096 to 16,384 for the 70B model. For the 7B/13B models, we use learning rate $2e^{-5}$ and a cosine learning rate schedule with 2000 warm-up steps. For the larger 34B/70B models, we find it important to set a smaller learning rate ($1e^{-5}$) to get monotonically decreasing validation losses. 

\subsection{Instruction Tuning}

Collecting human demonstration and preference labels for LLM alignment is a cumbersome and expensive process~\citep{rlhf,llama2}. The challenge and cost are more pronounced under long-context scenarios, which often involve complex information flow and specialized knowledge, e.g., processing dense legal/scientific documents, making the annotation task nontrivial even for skilled annotators. In fact, most existing open-source instruction datasets~\citep{dolly_v2,oasis} predominantly consist of short samples.

In this work, we found that a simple and cheap approach which leverages a pre-built large and diverse short-prompt dataset works surprisingly well on long-context benchmarks. Specifically, we take the RLHF dataset used in \chatllama and augment it with synthetic self-instruct~\citep{selfinst} long data generated by \chatllama itself, in the hope that the model can learn a diverse set of skills through the large amount of RLHF data and transfer that knowledge to long-context scenarios via self-instruct data. The data generation process focuses on QA-format tasks: starting from a long document in our pretraining corpus, we select a random chunk and prompt \chatllama to write question-answer pairs based on information in the text chunk. We collect both long and short form answers with different prompts. After that, we also adopt a self-critique step where we prompt \chatllama to verify the model-generated answers. Given a generated QA pair, we use the original long document (truncated to fit the model's maximum context length) as the context to construct a training instance. 


For short instruction data, we concatenate them as 16,384-token sequences. For long instruction data, we add padding tokens on the right so that models can process each long instance individually without truncation. While standard instruction tuning only calculates loss on the output tokens, we find it particularly beneficial to also calculate the language modeling loss on the long input prompts, which gives consistent improvements on downstream tasks (Section~\ref{sec:sft}). 



\section{Main Results}

\subsection{Pretrained Model Evaluation}

\begin{table}
  
  \centering
  \begin{tabular}{lcccccc}
    \toprule
    \textbf{Model} & Size & Coding & Math & MMLU & Commonsense & OpenQA \\
    \midrule
    \multirow{4}{*}{\llamavtwo} & 7B & 16.8 & 8.55 & 45.3 & 63.9 & 48.9 \\
     & 13B & 24.5 & 16.3 & 54.8 & 66.9 & 55.4\\
     & 34B & 27.8 & 24.2 & 62.6 & 69.9 & 58.7 \\
     & 70B & 37.4 & 35.2 & 68.9 & 71.9 & 63.6 \\
    \midrule
    \multirow{4}{*}{\model} & 7B & 20.6 & 10.5 & 47.8 & 64.9 & 51.0 \\
     & 13B &  25.7  & 21.5 &  60.1  &  67.8 & 56.8 \\
     & 34B & 29.9 & 29.0 & 65.0 & 70.9 & 60.3\\
     & 70B & \textbf{39.9} & \textbf{41.3} & \textbf{71.7} & \textbf{72.7} & \textbf{64.0} \\
    \bottomrule
  \end{tabular}
  \vspace{5pt}
  \caption{Performance on standard short-context benchmarks. We report \textit{Coding} score as the average of pass@1 of HumanEval~\citep{humaneval} and MBPP~\citep{mbpp}; Math score as the average of top-1 accuracy of 8-shot GSM8K~\citep{gsm8k} and 4-shot MATH~\citep{math}; OpenQA score as the average of 5-shot performance on NaturalQuestions~\citep{natural_questions} and TriviaQA~\citep{triviaqa}; \textit{Commonsense} score as the average of PIQA~\citep{piqa}, SIQA~\citep{siqa}, HellaSwag~\citep{hellaswag}, WinoGrande~\citep{winogrande}, ARC easy and challenge~\citep{arc}, OpenBookQA~\citep{OpenbookQA} and CommonsenseQA~\citep{commonsenseqa}.}
  \label{table:pretrain_regular_eval}
\end{table}

\paragraph{Short Tasks} To make long-context LLMs universally useful, an important desiderata is to ensure robust performance on standard short-context tasks. We verify our models' performance on a series of common benchmarks following the previous work~\citep{llama2}. The aggregated results are shown in Table~\ref{table:pretrain_regular_eval}. Overall, we observe \textit{on-par and, in most cases, stronger results} than \llamavtwo. Notably, we observe significantly improved results on coding, math, and knowledge intensive tasks such as MMLU. As shown in Table~\ref{table:compare_openai}, our model outperforms GPT-3.5 on MMLU and GSM8k. This is in contrast to a previous work~\citep{pi} which observes degradation on short tasks. We attribute the improvements to additional computation FLOPs and the knowledge learned from newly introduced long data. 

\begin{table}[t]
\small
\centering
\begin{adjustbox}{max width=1.2\linewidth, center}
\begin{tabular}{lcccccc}
    \toprule
    \textbf{Task} & GPT-3.5 & GPT-4 & PaLM & PaLM-2-L & \llamavtwo & \model \\
    \midrule
    MMLU (5-shot) & 70.0 & \textbf{86.4} & 69.3 & 78.3 & 68.9 & 71.7 \\
    Natural Questions (1-shot) & - & - & 29.3 & \textbf{37.5} & 33.0 & 35.7 \\
    GSM8K (8-shot) & 57.1 & \textbf{92.0} & 56.5 & 80.7 & 56.8 & 65.4 \\
    HumanEval (0-shot) & 48.1 & \textbf{67.0} & 26.2 & - & 29.9 & 32.9 \\
    \bottomrule
\end{tabular}
\end{adjustbox}
\vspace{5pt}
\caption{Comparison with closed models on standard short tasks.}
\label{table:compare_openai}
\end{table}

\begin{table}[h!]
  \small
  \centering
  \begin{adjustbox}{max width=1.2\linewidth, center}
      \begin{tabular}{lcccc}
        \toprule
        \multirow{2}{*}{\textbf{Model}}  & NarrativeQA  & Qasper  & QuALITY & QMSum \\
        & F1 (0-shot) & F1 (2-shot) & EM (2-shot) & ROUGE-geo$^*$ (1-shot) \\
        \midrule
        Focused Transformer (3B)   & 16.3 & 15.4 & 20.5 & 10.6\\
        Yarn-7B-128k  & 20.9 & 26.2 & 32.3 & 11.4 \\
        Together-7B-32k$^\dagger$  & 23.3 & 27.3 & 41.2 & 12.6\\
        Xgen-7B-8k-base  & 17.4 & 20.5 & 21.0 & 6.79  \\
        MPT-7B-8k        & 18.8 & 24.7 & 23.7 & 8.78  \\
        Yarn-13B-128k & 23.4 & 27.1 & 46.4 & 11.9 \\
        MPT-30B-8k       & 22.9 & 29.0 & 41.5 & 10.3 \\
        \midrule
        \llamavtwo 70B & 25.7 & 27.5 & 53.0 & 11.9 \\
        \midrule
        \model 7B   & 21.9 & 27.8 & 43.2 & 14.9 \\
        \model 13B  & 25.6 & 31.2 & 57.6 & 15.7 \\ 
        \model 34B  & 29.4 & 33.7 & 65.7 & 15.9 \\ 
        \model 70B & \textbf{30.9} & \textbf{35.7} & \textbf{79.7} & \textbf{16.5} \\
        \bottomrule
      \end{tabular}
  \end{adjustbox}
  \vspace{5pt}
  \caption{Comparison with open-source long-context models on research benchmarks. $^\dagger$: ``together-7B-32k'' is not a purely pretrained model and has been trained using supervised datasets which can improve its few-shot results. $^*$: ROUGE-geo is the geometric mean of ROUGE-1, 2 and L.
  All numbers are validation results and the maximum allowed prompt length is set to 16,384 tokens.
  }
  \label{table:compare_with_oss}
\end{table}

\begin{figure}[ht]
    \centering
    \includegraphics[width=0.55\linewidth]{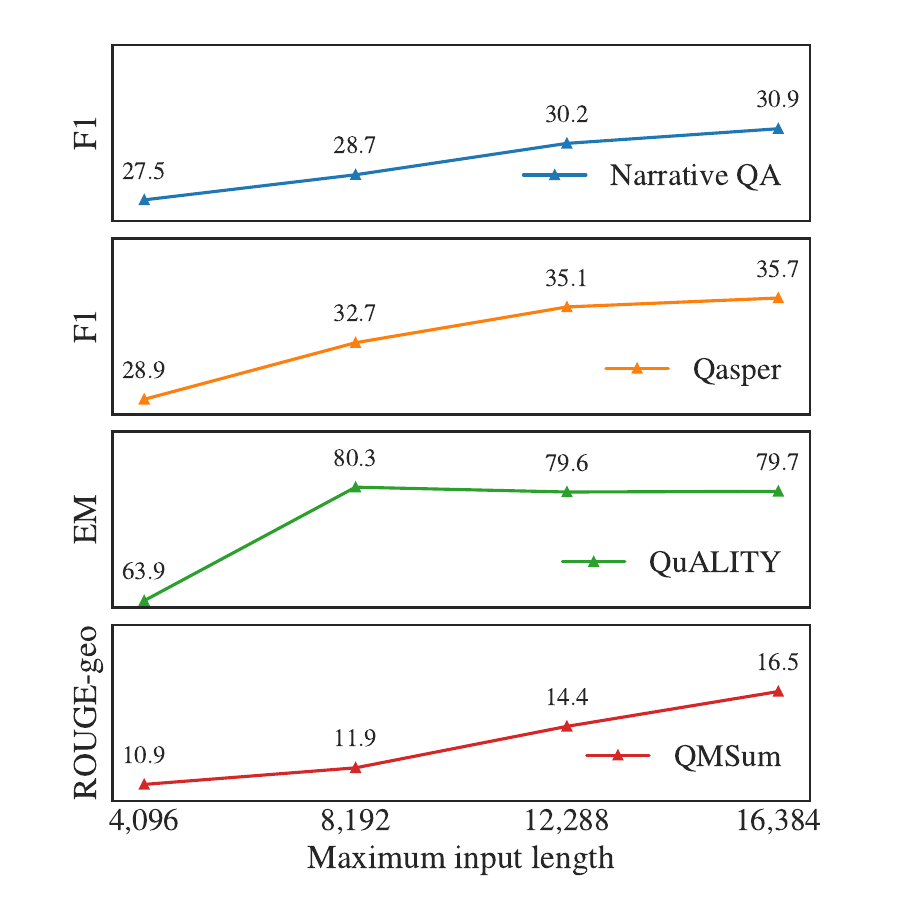}
    \caption{Performance on long-context tasks as the maximum context lengths of prompts increase. 
    }
    \label{fig:scrolls_70B_context_evo}
\end{figure}

\paragraph{Long Tasks} Different from previous works~\citep{pi,passkey} that mostly rely on perplexity and synthetic tasks to gauge long-context performance, we perform long-context evaluation using real-world language tasks. We evaluate 0-shot performance on NarrativeQA~\citep{narrativeqa}, 2-shot on QuALITY~\citep{quality} and Qasper~\citep{qasper}, and 1-shot on QMSum~\citep{qmsum}. The number of shots are decided based on the average sample length of each dataset (i.e., samples in Qasper and QuALITY are often much shorter than those of NarrativeQA). We focus these QA-style tasks because of the ease of prompt engineering\footnote{We use simple prompt ``\{\textsc{context}\} Q: \{\textsc{question}\}, A:'' to evaluate all pretrained models.} and less biased automatic evaluations.
The input prompts are truncated from the left side if the prompts exceed the maximum input length of the model or 16,384 tokens. We compare with open-source long-context models available in Huggingface Transformers, namely Focused Transformer~\citep{focused}, YaRN~\citep{yarn}, Xgen~\citep{xgen}, MPT~\citep{MPT7b,MPT30b} and Together's \llamavtwo fork~\citep{together}. As shown in Table~\ref{table:compare_with_oss}, 
our models achieve superior performance compared to these models. At the 7B scale, only ``Together-7B-32k'' can match our model's performance. Note that this model is not a purely self-supervised model and has been finetuned using a large supervised dataset to improve its few-shot results. 
As the 7/13B variants of our models have been trained with 32k-token sequences, we also perform comparisons using 32,768 maximum prompts lengths and the results are consistent, as shown in Table~\ref{table:compare_with_oss_32k}.



\paragraph{Effective Context Utilization} To validate that our models can effectively use increased context window, we first show in Figure~\ref{fig:scrolls_70B_context_evo} that the results on each long task improve monotonically as we increase the context lengths. Inspired by ~\citep{kaplan2020scaling, hoffmann2022training}, we also found that the language modeling loss of 
our model follows a power-law plus constant scaling relationship with the context length (Figure~\ref{fig:scaling_law}), suggesting: 

\begin{itemize}
    \item Our model continues to show gain in performance (on the language modeling loss) up to 32,768 tokens of text, despite having diminishing returns.
    Taking 
    our 70B model for example, if we double the context length, we can expect the loss to be reduced by a factor of $2^{-\beta} \approx 0.7$ plus a model specific constant $(1 - 2^{-\beta}) \cdot \gamma$. 
    \item Larger models can leverage the contexts more effectively, indicated by the larger $\beta$ value of the curves. 
\end{itemize}

\subsection{Instruction Tuning Results}

We test our instruction tuned model on ZeroSCROLLS~\citep{zeroscrolls} which bundles 10 long-context datasets spanning from summarization, question answering, to multi-document aggregation tasks. For a fair comparison, we use the same configuration (prompts, truncation strategy, and maximum generation lengths, etc) as specified by the benchmark. As shown in Table~\ref{table:zeroscrolls}, without using any human annotated long context data, 
our 70B chat model is able to outperform \texttt{gpt-3.5-turbo-16k} on 7 out of the 10 tasks. In addition, we run evaluations on six new long tasks introduced in L-Eval~\citep{leval} and again observe strong results, as shown in Table~\ref{table:long_eval_results} in the Appendix. We see that the finetuned model is particularly good at QA tasks which is the main theme of the self-instruct data. We expect the performance to be further improved if more diverse data are used for finetuning.

It is worth mentioning that evaluating long-context LLMs is a nontrivial task. The automatic metrics used in these benchmarks are limited in many ways. For instance, the summarization tasks only come with a single ground-truth summary and the $n$-gram matching metrics do not necessarily align with human preference. For QA and aggregation tasks, where the metric is less of a concern, truncating the input context might also remove the information necessary to answer the question. Another important caveat is that most proprietary models do not share their training data details, which makes it hard to take into consideration the potential leakage during public benchmark evaluation. 

\begin{table*}[ht]
    \small
    \centering
    \begin{adjustbox}{max width=1.2\linewidth, center}
    \begin{tabular}{l|ccc|ccccc|cc|c}
        \toprule
        \multirow{2}{*}{\textbf{Model}} &  \multicolumn{3}{c|}{Summarization} & \multicolumn{5}{c|}{Question answering} & \multicolumn{2}{c|}{Aggregation}  \\
        & GR & SS & QM & SQAL & Qspr & Nrtv & QALT & MuSQ & SpDg & BkSS & Avg \\
        \midrule
        
        GPT-3.5-turbo (4k)       & 21.3 & 16.1 & 15.6 & 20.4 & 49.3 & 25.1 & 66.6 & 27.1 & 49.1 & 49.8 & 34.0 \\
        GPT-3.5-turbo-16k$^{\dagger}$    & 24.3 & 16.2 & 17.4 & 21.4 & 50.0 & 29.5 & 72.0 & 27.0 & 54.1 & 54.6 & 36.7 \\
        Claude (8k) & 24.2 & 16.1 & 14.6 & 21.0 & 52.3 & 32.6 & 84.8 & 36.1 & 61.6 & 47.4 & 39.1 \\ 
        GPT4 (8k)        & 26.3 & 17.3 & 18.5 & 22.6 & 50.7 & 27.6 & 89.2 & 41.1 & 62.8 & 60.5 & 41.7 \\
        \midrule
  
        \modelchat  70B  & \underline{26.0} & 15.0 & \underline{20.0} & 20.9 & \underline{52.0} & \underline{31.7} & \underline{82.6} & \underline{27.3} & \underline{55.5} & 46.2 & 37.7 \\
        \bottomrule
    \end{tabular}
    \end{adjustbox}
    \caption{ZeroSCROLLS long-context leaderboard results.  $^{\dagger}$Evaluated as of 8/7/2023. The GPT-4 and Claude results are directly copied from the leaderboard. Underscored are the 7/10 tasks where our model outperforms \texttt{gpt-3.5-turbo-16k}. 
    }
    \label{table:zeroscrolls}
\end{table*}




\subsection{Human Evaluation}

\begin{figure}[ht]
    \centering
    \vspace{-20pt}
    \includegraphics[width=0.7\linewidth]{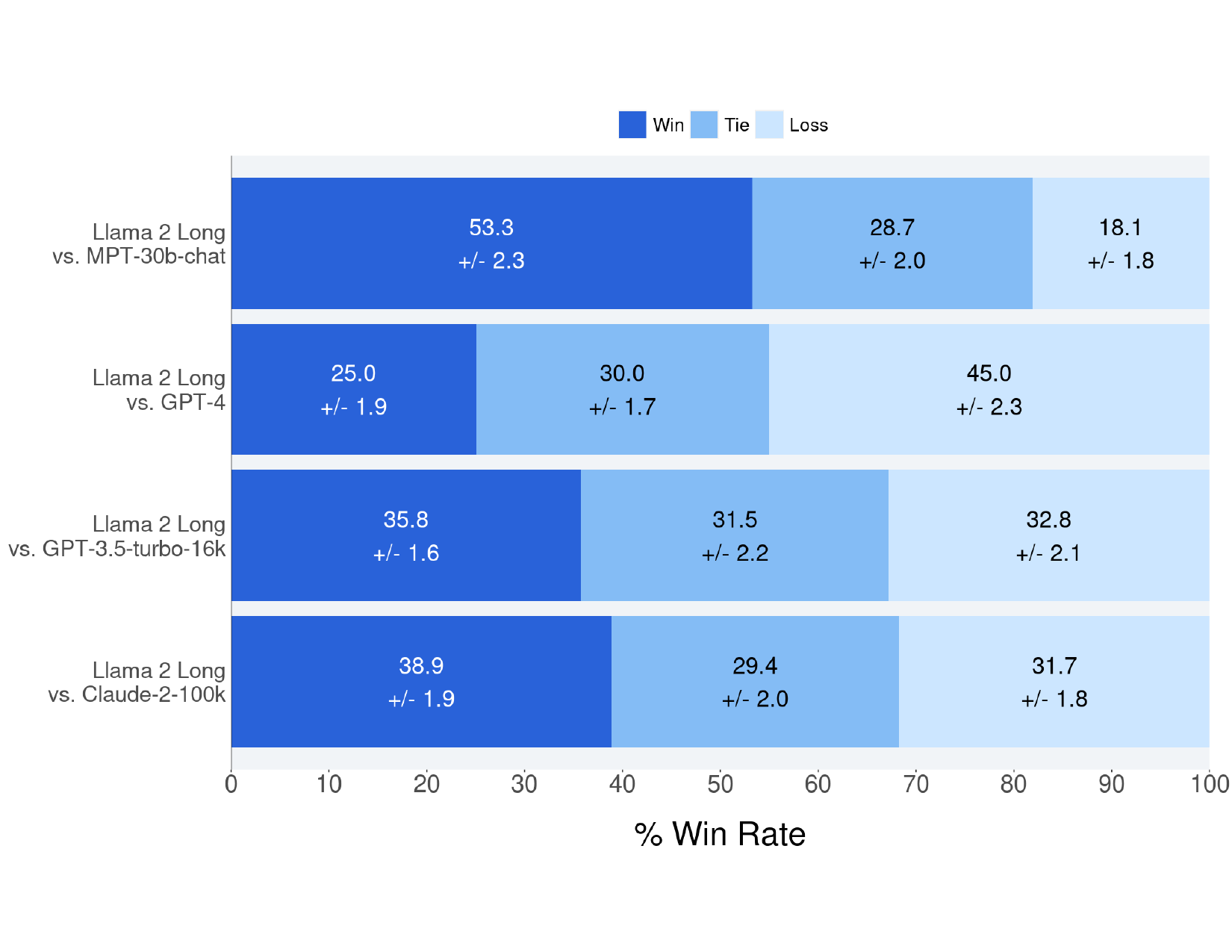}
    \vspace{-20pt}
    \caption{Human preference on model responses with multi-turn conversation and multi-document search query answering data. }
    \label{fig:winrate}
\end{figure}

Complementary to the automatic evaluation benchmark results, we conduct human evaluations by asking annotators whether they prefer the generation from our instruction finetuned model or from proprietary models like MPT-30B-chat, GPT-4, GPT-3.5-turbo-16k, and Claude-2 in terms of helpfulness, honesty, and harmlessness. Unlike automatic metrics, humans are better at evaluating the quality of model responses for long context models because of the large space of acceptable answers. We focus on two major application scenarios with a total of 2,352 examples. For multi-turn conversation data, each prompt is a chat history based on which the model needs to generate a coherent response. For the multi-document search query answering application, the model is provided with a few most relevant documents retrieved from a search session and the corresponding search query. We then evaluate how well these models can leverage the information (retrieved documents) to answer the given query. Each comparison example was evaluated by 3 different human annotators. The standard win rate of our our model over each model is calculated by averaging the result of each comparison example and the final score along with the 95\% confidence interval is shown in Figure~\ref{fig:winrate}. With very little instruction data, our model can achieve competitive performance against MPT-30B-chat, GPT-3.5-turbo-16k, and Claude-2. It is worth noting that human evaluation on longer context tasks is challenging and generally requires well trained and skilled annotators. We hope this study can not only give a sense of the potential of our instruction finetuned model on some long context downstream applications but also motivate future efforts in developing more robust long context automatic evaluations.

\section{Analysis}

In this section. We perform ablation experiments to justify our design choices (i.e. architecture modification, data mixes, and training curriculum) and quantify their contributions to the final performance. 

\subsection{Positional Encoding for Long Text}
\label{pe_analysis}


\begin{wraptable}{r}{0.45\textwidth}
  \small
  \centering
  \begin{tabular}{lccc}
    \toprule
    \textbf{PE} &  Books & CC & Wikipedia \\
    \midrule
    RoPE & 6.548 & 6.816 & 3.802\\
    \midrule
    \textsc{RoPE PI} & 6.341 & 6.786 & 3.775\\
    \textsc{RoPE ABF} & \textbf{6.323} & \textbf{6.780} & \textbf{3.771} \\
    \textsc{xPos ABF} & 6.331 & \textbf{6.780} & \textbf{3.771} \\
    \bottomrule
  \end{tabular}
  \caption{Validation perplexity of models with different positional encoding variants. All samples are $32,768$-token sequences (CC: CommonCrawl).}
  \label{table:pe_ablation_ppl}
\end{wraptable}

Our early experiments used a synthetic ``\textsc{first-sentence-retrieval}'' task to probe the effective context window of the pretrained models where we simply prompt the model to return the first sentence of the input. Our initial task results suggest that, with the original \llamavtwo architecture untouched, our model was unable to effectively attend beyond $4,000$ - $6,000$ tokens even after extensive long-context continual pretraining. We hypothesize that this bottleneck comes from the \textsc{RoPE} positional encoding used in \llamavtwo series which imposes a heavy decay on the attention scores\footnote{The quantity that heavily decays is $\mathbb{E}_{q, k}[\textsc{RoPE}(q, m)^\top\textsc{RoPE}(k, n) | m, n]$ as the relative position $|m - n|$ gets larger where $q, k$ are the query and key of the two tokens at position $m$ and $n$.} for distant tokens. We propose a simple modification to the default RoPE encoding to reduce the decaying effect -- increasing the ``base frequency $b$'' of \textsc{RoPE} from $10,000$ to $500,000$, which essentially reduces the rotation angles of each dimension. The idea is also concurrently suggested in the Reddit \cite{ntk_link} community and ~\citet{codellama}. The effect of the base frequency change is visualized in Figure~\ref{fig:pe_decay}.
Another concurrent approach named ``position interpolation'' (PI) \citep{pi} proposes to linearly scale the input positions such that the positions of tokens in the long sequences will to mapped to the model's original position range. As shown by the figure, it also implicitly achieves a decay reduction effect. 

Another interesting observation from the visualization is that 
RoPE introduces large ``oscillation'' in the long-range regions, which could be undesirable for language modeling~\citep{xpos}. To investigate whether this effect hurts performance, we also explored another recently proposed variant of rotary encoding, \textsc{xPos}~\citep{xpos}, which smooths the high-frequency component. Note that \textsc{xPos} with the default parameters suffers from the same decaying issue as \textsc{RoPE} and therefore, we also applied a similar decay fix to \textsc{xPos}. 

Specifically, we empirically compare the following methods: the \textsc{RoPE} baseline, \textsc{PI}, our proposed RoPE with adjusted base frequency (denoted as \textsc{RoPE ABF}), and \textsc{xPos ABF} (visual comparisons in Figure~\ref{fig:pe_decay}). We report results on 1) long-sequence validation perplexity in Table~\ref{table:pe_ablation_ppl} and Figure~\ref{fig:pe_ablation_ppl_evo}, 2) the \textsc{first-sentence-retrieval} context probing task\footnote{We also test on the \textsc{PassKey} task as used in~\citep{passkey}. All the model variants except \textsc{RoPE} can achieve perfect accuracy. We believe this task is overly simple for context probing.} in Figure~\ref{fig:pe_ablation_synthetic}, and 3) some representative regular context tasks in Table~\ref{table:pe_ablation_short} (to validate that long models do not degenerate on short-context tasks). 
All model variants are continually pretrained from the 7B \llamavtwo checkpoint with additional 80B tokens organized as 32,768-token long sequences.

\begin{figure}[ht]
    \centering
    \includegraphics[width=0.6\linewidth]{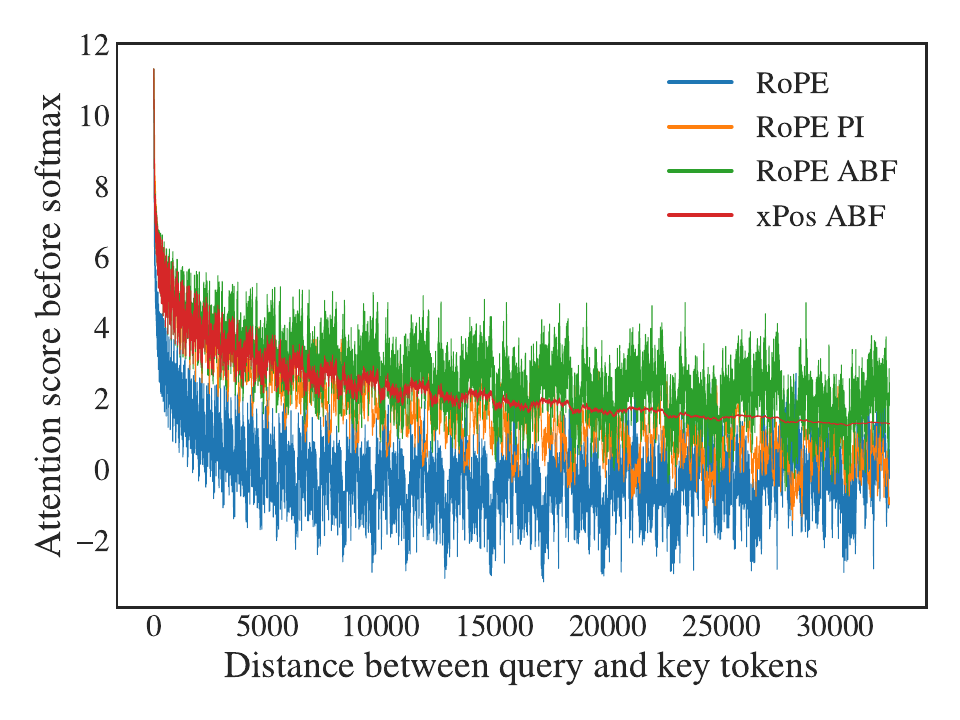}
    \caption{Decaying raw attention scores for distant tokens of explored positional encoding variants (assuming keys and queries are all-ones vectors).}
    \label{fig:pe_decay}
\end{figure}


Overall, results on these evaluations suggest that \textsc{RoPE ABF} performs the best among all explored variants. In particular, we see that \textsc{RoPE ABF} is the only variant that can maintain its performance up to the full 32,768-token context window on the \textsc{first-sentence-retrieval} task. We also found that \textsc{xPos ABF} with less oscillation does not lead to substantial gains, suggesting that these artifacts are not detrimental to language modeling. 
While \textsc{xPos} is claimed to possess better extrapolation property \citep{xpos}, we found that, with the base frequency modification, \textsc{xPos} does not extrapolate better than \textsc{RoPE}
(see Appendix \ref{app:extrapolation}). In addition to empirical results, we provide a theoretical analysis of RoPE ABF and its difference to PI in Appendix \ref{sec:app_detailed_results}. We argue that RoPE ABF distributes the embedded vectors with an increased granularity when compared to RoPE PI, making it a easier for the model to distinguish between positions.
It is worth noting that the relative distance between the embedded vectors has a linear dependence on the key parameter of RoPE PI and a logarithmic dependence on the key parameter of RoPE ABF, which coincides with our empirical observation that the base-frequency is not very sensitive and can be easily adjusted based on the max sequence length.


\begin{table}
  \centering
  \small
  \begin{tabular}{lccccc}
    \toprule
    & HumanEval & Math & MMLU & HellaSwag & TQA \\
    \midrule
    RoPE & 14.63 & \textbf{3.62} & 45.69 & 76.31 & 65.23 \\
    \midrule
    \textsc{RoPE PI} & 15.24 & 3.08 & 45.84 & 76.65 & 65.96 \\
    \textsc{RoPE-ABF} & \textbf{17.07} & 3.52 & \textbf{46.24} & \textbf{76.73} & 66.04 \\
    \textsc{xPos-ABF} & 16.46 & 3.54 & 45.72 & 76.68 & \textbf{66.14} \\
    \bottomrule
  \end{tabular}
  \vspace{5pt}
  \caption{The performance of models with different positional encoding variants on standard short-context benchmarks.}
  \label{table:pe_ablation_short}
\end{table}


\begin{figure}
\begin{adjustbox}{max width=1.0\linewidth, center}
    \begin{subfigure}{.7\linewidth}
      \centering
      \includegraphics[width=\linewidth]{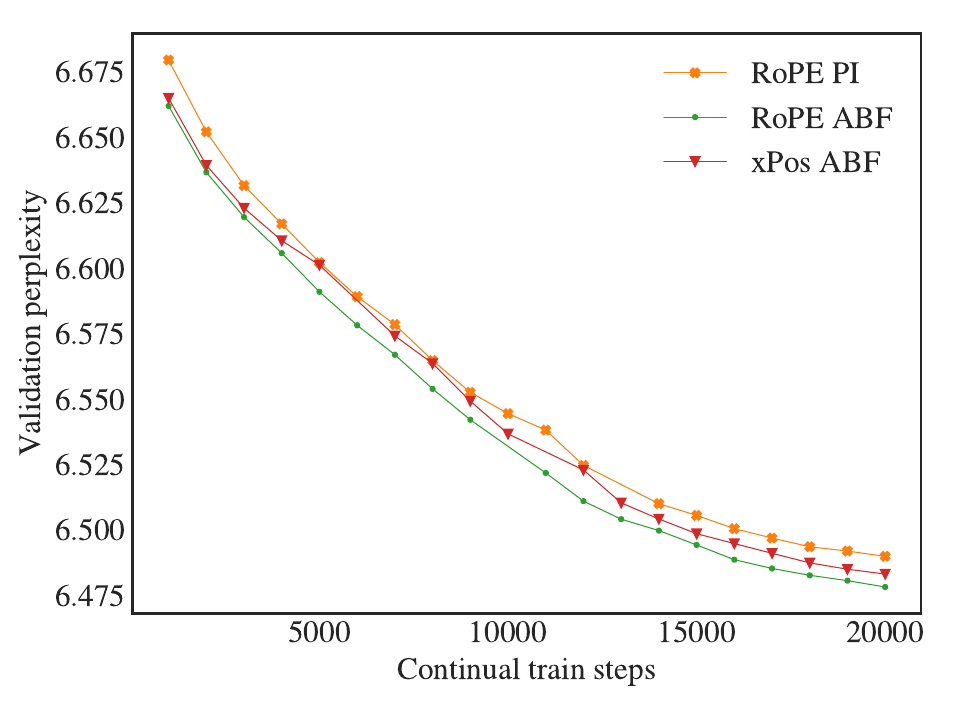}
      \caption{Validation PPL (16k-token sequences) on a held-out long-context dataset.}
      \label{fig:pe_ablation_ppl_evo}
    \end{subfigure}
    \hspace{-10pt}
    \begin{subfigure}{.7\linewidth}
      \centering
      \includegraphics[width=\linewidth]{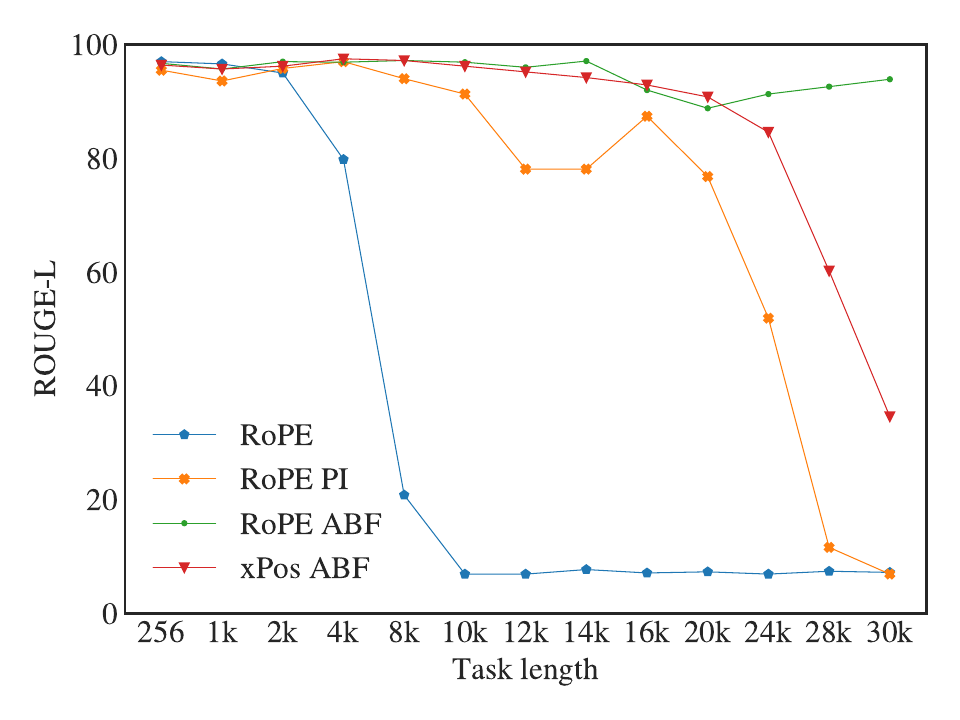}
      \caption{Performance on \textsc{first-sentence-retrieval} task.}
      \label{fig:pe_ablation_synthetic}
    \end{subfigure}
\end{adjustbox}
\caption{Comparison of positional encoding variants on synthetic sentence retrieval task and validation perplexity evolution during continual pretraining. }
\end{figure}

\subsection{Pretraining Data Mix}
\label{datamix_analysis}

The data used to continually pretrain our model combines existing datasets used by \llamavtwo and new long text data. We also adjusted the data source mix ratio to up-weight long data samples. Our early experiments with 7B models confirms the significant improvements using this data mix for long-context tasks, as shown in the first two rows of Table~\ref{table:data_ablation_scrolls}.
In this section, we aim to rigorously investigate the source of improvements. In particular, we are interested in differentiating the effects of the data length distribution and the quality of the corpus itself. 

We perform two additional ablations using \llamavtwo's pretrain datasets: 1) we remove the long text data from the \llamavtwo dataset and continually pretrain our model with mostly short documents; 2) we increase the sample weights of existing long text data to be similar to the long text ratio used by proposed new model. Interestingly, even with most of the long texts removed, the model can still obtain most of the performance gain over \llamavtwo. We also find that there is no clear and consistent advantage as we greatly increase the long data ratio (the third row v.s. the fourth row in Table~\ref{table:data_ablation_scrolls} and Table~\ref{table:data_ablation_short}). 
We observe similar results on the \textsc{first-sentence-retrieval} task as shown by Figure~\ref{fig:data_ablation_synthetic} in the Appendix.

Based on the above ablations, we can see that adjusting the length distribution of the pretrain data does not provide major benefits. However, as we evaluate these model variants' performance on standard short-context tasks, we find that new data mix also leads to large improvements in many cases, especially knowledge-intensive tasks like MMLU, as shown in Table~\ref{table:data_ablation_short}. These results suggest that \textit{long-context LLMs can be effectively trained even with very limited long data} and the improvements of our pretrain data over the one used by \llamavtwo mostly come from the quality of the data itself, instead of the length distribution difference.

\begin{table}[t]
  \small
  \centering
  \begin{tabular}{lcccc}
    \toprule
    \multirow{2}{*}{\textbf{Continual Pretrain Data}} & NarrativeQA & Qasper & Quality & QMSum \\
    & $\Delta$ F1 & $\Delta$ F1 & $\Delta$ EM & $\Delta$ ROUGE-geo \\
    \midrule
    \model data mix & \textbf{23.70\%} & \textbf{43.64\%} & \textbf{75.5\%} & \textbf{45.70\%} \\
    \llamavtwo data mix        & 18.23\% & 38.12\% & 60.3\% & 44.87\% \\
    - remove long text data          & 19.48\% & 39.14\% & 67.1\% & 36.60\% \\
    - upsample existing long text data & 22.15\% & 36.82\% & 65.0\% & 42.83\% \\
    
    \bottomrule
  \end{tabular}
  \vspace{5pt}
  \caption{Comparison of different pretraining data mix on long-context tasks. Instead of showing the absolute performance, we report relative improvements over the 7B \llamavtwo which has a 4,096-token context window. All models are evaluated with prompts truncated at 16,384 tokens.}
  \label{table:data_ablation_scrolls}
\end{table}


\begin{table}[h]
  \small
  \centering
  \begin{adjustbox}{max width=1.2\linewidth, center}
  \begin{tabular}{lccccc}
    \toprule
    \textbf{Continual Pretrain Data} & HumanEval & Math & MMLU & HellaSwag & TQA \\
    \midrule
    \model data mix & \textbf{17.08} & \textbf{4.09} & \textbf{48.62} & 76.74 & 66.24 \\
    \llamavtwo data mix             & 15.24 & 3.61 & 46.30 & 76.63 & \textbf{66.71} \\
    - remove long text data               & 17.07 & 3.57 & 46.25 & \textbf{76.76} & 65.90 \\
    - upsample existing long text data & 17.07 & 3.53 & 46.25 & 76.74 & 66.04 \\

    \bottomrule
  \end{tabular}
  \end{adjustbox}
  \vspace{5pt}
  \caption{Standard short task performance of long-context models with different pretrain data mix.}
  \label{table:data_ablation_short}
\end{table}

\subsection{Instruction Tuning}
\label{sec:sft}

We explored various strategies to instruction-finetune the pre-trained long context model which do not require any supervised long data. We start with only finetuning the models with short instruction data from \chatllama (referred as "RLHF V5" in \citep{llama2}) and then blend in with some pretrain data to avoid forgetting of previous long context continual pretraining. As demonstrated in Table~\ref{table:instruct_tuning_ablation}, using only short instruction data can already produce a decent long model that significantly outperforms \llamavtwo on various long-context tasks. On top of this dataset that only includes short prompts, we see that adding pretrain data (calculating language modeling loss on the whole sequence) can further boost the performance on most datasets. 
Inspired by this, we add the LM loss over the long context inputs when we finetune with self-instruct data. This simple trick makes learning more stable when we have unbalanced input and output lengths\footnote{In our cases, the output lengths of most samples are a lot shorter than the those of the long-context inputs.}, which gives significant improvements on most of the tested tasks (the last two rows of Table~\ref{table:instruct_tuning_ablation}).




\begin{table*}[ht]
  
  \small
  \centering
  \begin{adjustbox}{max width=1.2\linewidth, center}
  \begin{tabular}{l|ccccc}
    \toprule
    \textbf{Settings} & Qasper & NarrativeQA & QuALITY & SummScreenFD & QMSum \\
    \midrule
    \chatllama baseline & 12.2 & 9.13 & 56.7 & 10.5 & 14.4 \\
    \midrule
    \multicolumn{2}{l}{\model \textit{finetuned} with:} \\
    
    "RLHF V5"    & 22.3 & 13.2 & 71.4 & 14.8 & 16.9 \\
    "RLHF V5" mix pretrain  & 23.7  & 16.6 & 76.2 & \textbf{15.7} & 17.8 \\
    "RLHF V5" mix self-inst w/o LM loss & 35.7 & 22.3 & 59.3 & 12.2 & 13.4 \\
    "RLHF V5" mix self-inst with LM loss & \textbf{38.9} & \textbf{23.3} & \textbf{77.3} & 14.5 & \textbf{18.5}\\
    \bottomrule
  \end{tabular}
  \end{adjustbox}
  \caption{Comparison of different instruction finetuning data mixes.}
  \label{table:instruct_tuning_ablation}
\end{table*}

\subsection{Training Curriculum}
\label{sec:curriculum_analysis}




Continual pretraining has demonstrated its efficacy in our experiments, but an open question still remains: does pretraining from scratch with long sequences yield better performance than continual pretraining? In this section, we study different training curricula and try to investigate if continual pretraining can offer competitive performance with less computation budget. We start off by pretraining a 7B model with 32,768 sequence length from start to the end. Then we explored various two-stage training curricula where we begin with 4096 sequence length and switch to 32,768 when the model completes 20\%, 40\%, 80\% of whole training process. For all cases, we keep the same number of total training tokens and make sure the number of tokens per each gradient update remains constant (4 million tokens) by adjusting the batch size and sequence length accordingly. 


We evaluate our models on the long-text QA tasks used in Section~\ref{datamix_analysis} and report the final models' perplexity on different validation corpora. 
As shown in Table~\ref{table:scrolls_curriculum} and Table~\ref{table:ppl_curriculum}, continual pretraining from short context models can easily save around 40\% FLOPs while imposing almost no loss on performance. These results also align with the training loss curves we observed from each run in Figure~\ref{fig:training_curve_curriculum_smooth_1} -- the models can quickly adapt to the increased sequence length and get to similar loss scale.

\begin{table}[h]
  \centering
  \begin{tabular}{lccccc}
    \toprule
    \multirow{2}{*}{\textbf{Pretrain Curriculum}} & \multirow{2}{*}{\textbf{FLOPs}} & NarrativeQA & Qasper & Quality & QMSum \\
    & &  F1 &  F1 &  EM & ROUGE-geo \\
    \midrule
    32k from scratch & $3.783\times 10^{22}$ & 18.5 & 28.6 & 37.9 & 11.46 \\
    4k$\rightarrow$32k $@$ 20\%  & $3.405 \times 10^{22}$ & 20.0 & 28.1 & 38.8 &  12.09\\
    4k$\rightarrow$32k $@$ 40\% & $3.026 \times 10^{22}$ & 20.1 & 27.0 & 37.4 & 12.44\\
    4k$\rightarrow$32k $@$ 80\% & $2.270 \times 10^{22}$ & 18.5 & 25.0 & 38.3 & 11.00 \\
    \bottomrule
  \end{tabular}
  \vspace{5pt}
  \caption{Comparison of models with different training curricula on long context QA tasks.}
  
  \label{table:scrolls_curriculum}
\end{table}

\begin{table}[h]
  \centering
  \begin{tabular}{lcccc}
    \toprule
    \textbf{Model} & CC & Books & Wikipedia \\
    \midrule 
    32k from scratch       & 7.67 & 6.52 & 4.31 \\
    4k$\rightarrow$32k $@$ 20\%  & 7.59 & 6.46 & 4.26 \\
    4k$\rightarrow$32k $@$ 40\% & 7.59  & 6.46 & 4.25 \\
    4k$\rightarrow$32k $@$ 80\% & 7.59  & 6.49 & 4.25 \\
    \bottomrule
  \end{tabular}
  \vspace{5pt}
  \caption{Perplexity evaluation of models with different training curricula on three validation sets. }
\label{table:ppl_curriculum}
\end{table}

\begin{figure}[h!]
\begin{adjustbox}{max width=1.0\linewidth, center}
    \begin{subfigure}{.7\linewidth}
      \centering
      \includegraphics[width=\linewidth]{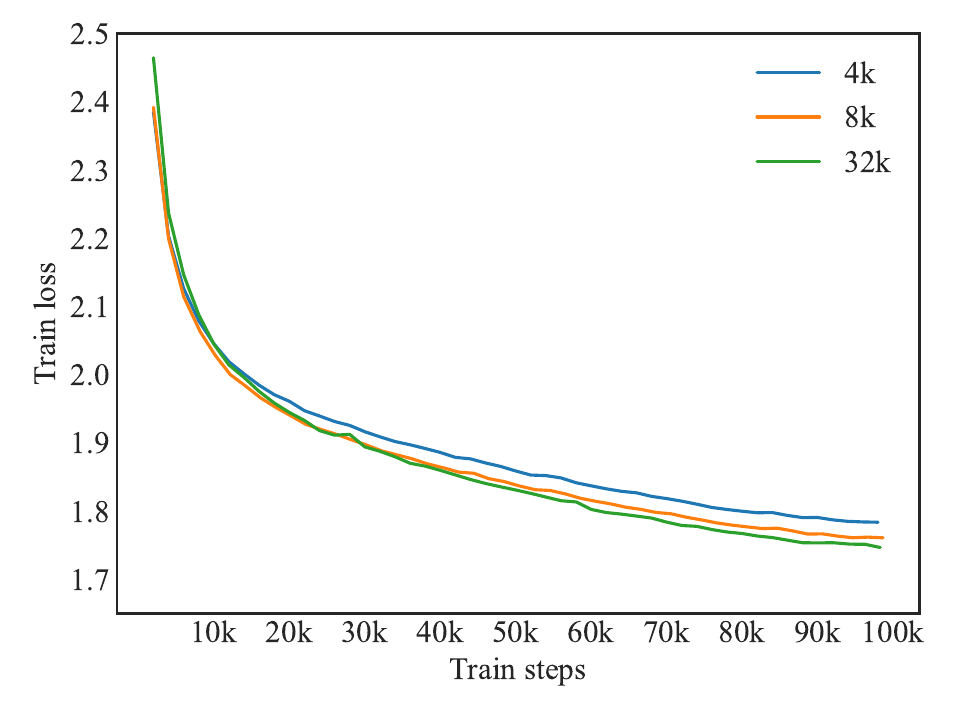}
    \end{subfigure}
    \hspace{-10pt}
    \begin{subfigure}{.7\linewidth}
      \centering
      \includegraphics[width=\linewidth]{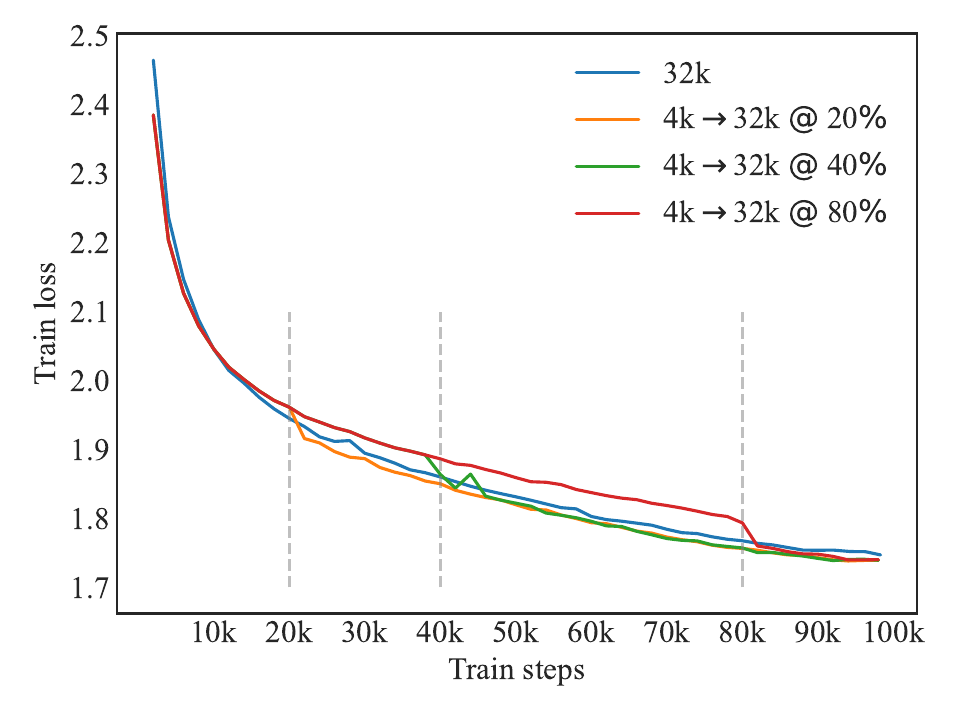}
    \end{subfigure}
    \hspace{-10pt}
\end{adjustbox}
\caption{Smoothed loss curves for the training curriculum ablation. On the left, we show losses for models trained with a fixed context window. On the right, we compare training curricula where we switch the context length from 4,096 to 32,768 tokens at different stages indicated by the dashed lines. Our models can quickly adapt to the new sequence length within a few thousand steps. }
\label{fig:training_curve_curriculum_smooth_1}
\end{figure}





\section{AI Safety}
\subsection{Evaluation on Safety Benchmarks}
Despite showing excellent performance on various of downstream tasks, large language models are prone to generating harmful, misinformative, and biased contents \citep{lin2021truthfulqa, hartvigsen2022toxigen, dhamala2021bold, ji2023survey}. 
Long-context language models can process extended inputs in their context window, but at the same time, they also face a higher risk of jailbreak, especially through means such as prompt injection \citep{greshake2023not}. 
In this section, we evaluate the safety capability of instruction fine-tuned model using three standard academic benchmarks including TruthfulQA \citep{lin2021truthfulqa}, ToxiGen \citep{hartvigsen2022toxigen}, and BOLD \citep{dhamala2021bold}, similar to \citep{llama2}. 
We focus on the largest instruction fine-tuned model variant (i.e., 70B) and compare its results with both open sourced LLMs (Falcon-instruct \cite{falcon40b}, MPT-instruct \cite{MPT30b}) and propriety LLMS (GPT-3.5, GPT-4 \citep{gpt-4}, Claude-2 \citep{claude}) in Table~\ref{tab:safety_evaluation}.

We observe that in general instruction fine-tuned model maintains similar safety performance compared to \chatllama and is safer and less biased compared to other open-source LLMs such as Falcon-instruct and MPT-instruct.
AI safety is a complex domain and it can be extremely difficult to comprehensively evaluate all safety aspects of instruction fine-tuned model with three benchmarks. 
However, we hope our analysis can serve as a pilot study and provide directional signals on long-context large language models' safety performance, which are not discussed in other works on the same topic \citep{long_llama, long_net, pi}.
Currently the community also lacks dedicated safety benchmarks for long-context large language model evaluation and we plan to invest in this direction in our future work. 

\paragraph{TruthfulQA} We evaluate instruction fine-tuned model on TruthfulQA \citep{lin2021truthfulqa} to benchmark its factuality. The benchmark consists of 817 questions covering 38 categories including health, law, finance, and politics \citep{lin2021truthfulqa}. 
Similar to \citep{llama2}, we use few-shot prompts with 6 random QA pairs for generation and then leverage two fine-tuned GPT-3 models to classify whether the generation is truthful and informative. 
We report the percentage of generations that are both truthful and informative as the final metric in Table~\ref{tab:safety_evaluation}.

\paragraph{ToxiGen} We measure the toxicity of instruction fine-tuned model using ToxiGen \citep{hartvigsen2022toxigen} where we check the percentage of toxic and hateful generations against 13 minority groups. 
Following \citep{llama2}, we filtered out prompts where annotators disagree with each other on the target demographic group. 
We use the default ToxiGen classifier fine-tuned based on RoBERTa \citep{liu2019roberta} to evaluate the level of toxicity of the model's outputs. 
We report the percentage of toxic generations across all groups in Table~\ref{tab:safety_evaluation}.

\paragraph{BOLD} Bias in Open-Ended Language Dataset (BOLD) \cite{dhamala2021bold} is used in this work to quantify how biased the models are against people from different demographic groups. 
This dataset consists of 23,679 prompts extracted from English Wikipedia covering five domains including race, gender, religion, political ideology and profession with 43 subgroups in total.
Following \cite{llama2}, we exclude prompts belonging to Hinduism and Atheism religious subgroups as they only feature 12 and 29 prompts, respectively.
After generations are inferred from each model, we leverage the Valence Aware Dictionary and Sentiment Reasoner (VADER) \cite{hutto2014vader} to perform sentiment analysis with a score ranging between -1 and 1. 
A positive score corresponds to a positive sentiment towards the subgroup mentioned in the prompt and vice versa. 
A sentiment score close to 0 indicates neutral sentiment which is desired.
We report the average sentiment score across 43 demographic subgroups as the final metric for BOLD in Table~\ref{tab:safety_evaluation}. 

\begin{table}[htbp]
\centering
\begin{tabular}{lrccc}
\toprule
& Model Size & {TruthfulQA $\uparrow$} & {ToxiGen $\downarrow$} & {BOLD $\downarrow$}\\
\midrule
GPT-3.5-turbo  & - & 78.46 & 0.01 & 0.50 \\ 
GPT-3.5-turbo-16k & - & 75.15 & 0.07 & 0.49 \\ 
Claude-2 & - & 62.66 & 0.05 & 0.46 \\ 
GPT4 & - & \textbf{80.66} & 0.03 & 0.43 \\ 
Falcon-instruct & 40B & 57.41 & 3.3 & 0.39 \\ 
MPT-instruct & 30B & 42.71 & 16.85 & \textbf{0.34} \\ 
\midrule  
\chatllama & 70B & 64.14 & 0.01 & 0.41 \\ 
\midrule
\modelchat & 70B & 60.95 & \textbf{0.00} & 0.40 \\
\bottomrule
\end{tabular}
\vspace{5pt}
\caption{Evaluation of fine-tuned LLMs on three safety benchmarks. For TruthfulQA, we present the percentage of generations that are both truthful and informative (the higher the better). For ToxiGen, we present the percentage of toxic generations across all groups (the smaller the better). For BOLD, we report the average sentiment score across 43 demographic groups (the closer to 0 the better). 
}
\label{tab:safety_evaluation}
\end{table}

\subsection{Red Teaming Exercises}
Currently there is no open-sourced safety benchmark designed for long-context understanding. 
To ensure that the models are safe in long context use scenarios, we performed internal red teaming to better understand the vulnerability of our chat model. 
We attack the model by feeding long contexts (e.g., long conversations) to it, followed by adversarial prompts covering risky areas including illicit and criminal conducts (e.g., terrorism, theft, and human trafficking), hateful and harmful behaviors (e.g., defamation, self-harm, eating disorders, and discrimination), and unqualified advice \cite{llama2}.
Through manual inspection, we did not observe significant risks compared to \chatllama \cite{llama2}.
We plan to invest more in new attack vectors against long context large models in future work. 


\section{Limitations}

\paragraph{Limited Functionality.} The our model proposed in this paper has not yet been finetuned for a wide range of long-context applications, such as creative writing that require long-form outputs. Applying existing alignment recipes, e.g., RLHF, for various scenarios is expensive and nontrivial. Even skilled annotators may struggle to the intricate details in dense texts. In this regard, we consider developing efficient alignment methods for long LLMs to be a very valuable direction for future research. 

    
\paragraph{Tokenizer Efficiency.} While the proposed our model series can consume contexts up to 32,768 tokens, the actually number of words our model can take is largely affected by the tokenizer behaviour. The tokenizer used by the Llama series has a relatively small vocabulary (32k symbols) and often produces longer sequences compare to the sequences given by GPT-3.5's tokenizer -- we observe our tokenizer often produce $10\%$ more tokens on average. Additionally, the tokenizer we use also cannot efficiently handle whitespace, making it inefficient to process long code data.


\paragraph{Hallucination.} Like other LLMs, we have observed hallucination issue when testing the proposed our model. While this issue is common for short-context models, tackling with this problem for long-context models can be more pronounced because of the dense information they consume and the insufficient alignment process. 

\section{Conclusion}

We present a series of long-context LLMs that leverage a simple yet necessary position encoding refinement and continual pretraining to achieve strong long-context performance. Our long context scaling is performed by continually pretraining from \llamavtwo with additional 400B tokens and outperform \llamavtwo on both short and long-context tasks. Our models also demonstrate superior performance compared to existing open-source long-context models and compare favorably against \texttt{gpt-3.5-turbo-16k} on a suite of long-context tasks after a simple instruction finetuning procedure without human supervision. We complement our results with a comprehensive analysis, providing insights on the influences of various factors including the nuances of position encodings, the data mix, and the pretraining curriculum on the final performance. We hope our study could make long-context LLMs more accessible and facilitate further advancements in this field.

\section{Acknowledgement}

We would like to thank Nikolay Bashlykov, Matt Wilde, Wenyin Fu, Jianyu Huang, Jenya Lee, Mathew Oldham, and Shawn Xu for their invaluable support on the data, infrastructure, and various other aspects of this project.

\medskip

{
    \small
    \bibliographystyle{plainnat}
    \bibliography{main}

\begin{thebibliography}{53}
\providecommand{\natexlab}[1]{#1}
\providecommand{\url}[1]{\texttt{#1}}
\expandafter\ifx\csname urlstyle\endcsname\relax
  \providecommand{\doi}[1]{doi: #1}\else
  \providecommand{\doi}{doi: \begingroup \urlstyle{rm}\Url}\fi

\bibitem[Almazrouei et~al.(2023)Almazrouei, Alobeidli, Alshamsi, Cappelli,
  Cojocaru, Debbah, Goffinet, Heslow, Launay, Malartic, Noune, Pannier, and
  Penedo]{falcon40b}
Ebtesam Almazrouei, Hamza Alobeidli, Abdulaziz Alshamsi, Alessandro Cappelli,
  Ruxandra Cojocaru, Merouane Debbah, Etienne Goffinet, Daniel Heslow, Julien
  Launay, Quentin Malartic, Badreddine Noune, Baptiste Pannier, and Guilherme
  Penedo.
\newblock {Falcon-40B}: an open large language model with state-of-the-art
  performance.
\newblock 2023.

\bibitem[An et~al.(2023)An, Gong, Zhong, Li, Zhang, Kong, and Qiu]{leval}
Chenxin An, Shansan Gong, Ming Zhong, Mukai Li, Jun Zhang, Lingpeng Kong, and
  Xipeng Qiu.
\newblock L-eval: Instituting standardized evaluation for long context language
  models.
\newblock \emph{arXiv preprint arXiv:2307.11088}, 2023.

\bibitem[{Anthropic}(2023)]{claude}
{Anthropic}.
\newblock Introducing {{100K Context Windows}}, 2023.
\newblock URL \url{https://www.anthropic.com/index/100k-context-windows}.

\bibitem[Austin et~al.(2021)Austin, Odena, Nye, Bosma, Michalewski, Dohan,
  Jiang, Cai, Terry, Le, and Sutton]{mbpp}
Jacob Austin, Augustus Odena, Maxwell~I. Nye, Maarten Bosma, Henryk
  Michalewski, David Dohan, Ellen Jiang, Carrie~J. Cai, Michael Terry, Quoc~V.
  Le, and Charles Sutton.
\newblock Program synthesis with large language models.
\newblock \emph{arXiv:abs/2108.07732}, 2021.

\bibitem[Bisk et~al.(2020)Bisk, Zellers, Gao, Choi, et~al.]{piqa}
Yonatan Bisk, Rowan Zellers, Jianfeng Gao, Yejin Choi, et~al.
\newblock Piqa: Reasoning about physical commonsense in natural language.
\newblock In \emph{Proceedings of the AAAI conference on artificial
  intelligence}, volume~34, pages 7432--7439, 2020.

\bibitem[Chen et~al.(2021)Chen, Tworek, Jun, Yuan, Pinto, Kaplan, Edwards,
  Burda, Joseph, Brockman, et~al.]{humaneval}
Mark Chen, Jerry Tworek, Heewoo Jun, Qiming Yuan, Henrique Ponde de~Oliveira
  Pinto, Jared Kaplan, Harri Edwards, Yuri Burda, Nicholas Joseph, Greg
  Brockman, et~al.
\newblock Evaluating large language models trained on code.
\newblock \emph{arXiv preprint arXiv:2107.03374}, 2021.

\bibitem[Chen et~al.(2023)Chen, Wong, Chen, and Tian]{pi}
Shouyuan Chen, Sherman Wong, Liangjian Chen, and Yuandong Tian.
\newblock Extending context window of large language models via positional
  interpolation, 2023.

\bibitem[Child et~al.(2019)Child, Gray, Radford, and Sutskever]{blocksparse}
Rewon Child, Scott Gray, Alec Radford, and Ilya Sutskever.
\newblock Generating long sequences with sparse transformers.
\newblock \emph{arXiv preprint arXiv:1904.10509}, 2019.

\bibitem[Clark et~al.(2018)Clark, Cowhey, Etzioni, Khot, Sabharwal, Schoenick,
  and Tafjord]{arc}
Peter Clark, Isaac Cowhey, Oren Etzioni, Tushar Khot, Ashish Sabharwal, Carissa
  Schoenick, and Oyvind Tafjord.
\newblock Think you have solved question answering? try arc, the ai2 reasoning
  challenge.
\newblock \emph{arXiv preprint arXiv:1803.05457}, 2018.

\bibitem[Cobbe et~al.(2021)Cobbe, Kosaraju, Bavarian, Chen, Jun, Kaiser,
  Plappert, Tworek, Hilton, Nakano, et~al.]{gsm8k}
Karl Cobbe, Vineet Kosaraju, Mohammad Bavarian, Mark Chen, Heewoo Jun, Lukasz
  Kaiser, Matthias Plappert, Jerry Tworek, Jacob Hilton, Reiichiro Nakano,
  et~al.
\newblock Training verifiers to solve math word problems.
\newblock \emph{arXiv preprint arXiv:2110.14168}, 2021.

\bibitem[Conover et~al.(2023)Conover, Hayes, Mathur, Xie, Wan, Shah, Ghodsi,
  Wendell, Zaharia, and Xin]{dolly_v2}
Mike Conover, Matt Hayes, Ankit Mathur, Jianwei Xie, Jun Wan, Sam Shah, Ali
  Ghodsi, Patrick Wendell, Matei Zaharia, and Reynold Xin.
\newblock Free dolly: Introducing the world's first truly open
  instruction-tuned llm, 2023.
\newblock URL
  \url{https://www.databricks.com/blog/2023/04/12/dolly-first-open-commercially-viable-instruction-tuned-llm}.

\bibitem[Dao et~al.(2022)Dao, Fu, Ermon, Rudra, and R{\'{e}}]{flashattention}
Tri Dao, Daniel~Y. Fu, Stefano Ermon, Atri Rudra, and Christopher R{\'{e}}.
\newblock Flashattention: Fast and memory-efficient exact attention with
  io-awareness.
\newblock In \emph{NeurIPS}, 2022.

\bibitem[Dasigi et~al.(2021)Dasigi, Lo, Beltagy, Cohan, Smith, and
  Gardner]{qasper}
Pradeep Dasigi, Kyle Lo, Iz~Beltagy, Arman Cohan, Noah~A. Smith, and Matt
  Gardner.
\newblock A dataset of information-seeking questions and answers anchored in
  research papers.
\newblock In \emph{Proceedings of the 2021 Conference of the North American
  Chapter of the Association for Computational Linguistics: Human Language
  Technologies}, pages 4599--4610, Online, June 2021. Association for
  Computational Linguistics.
\newblock \doi{10.18653/v1/2021.naacl-main.365}.
\newblock URL \url{https://aclanthology.org/2021.naacl-main.365}.

\bibitem[Dhamala et~al.(2021)Dhamala, Sun, Kumar, Krishna, Pruksachatkun,
  Chang, and Gupta]{dhamala2021bold}
Jwala Dhamala, Tony Sun, Varun Kumar, Satyapriya Krishna, Yada Pruksachatkun,
  Kai-Wei Chang, and Rahul Gupta.
\newblock Bold: Dataset and metrics for measuring biases in open-ended language
  generation.
\newblock In \emph{Proceedings of the 2021 ACM conference on fairness,
  accountability, and transparency}, pages 862--872, 2021.

\bibitem[Ding et~al.(2023)Ding, Ma, Dong, Zhang, Huang, Wang, and
  Wei]{long_net}
Jiayu Ding, Shuming Ma, Li~Dong, Xingxing Zhang, Shaohan Huang, Wenhui Wang,
  and Furu Wei.
\newblock Longnet: Scaling transformers to 1,000,000,000 tokens, 2023.

\bibitem[Greshake et~al.(2023)Greshake, Abdelnabi, Mishra, Endres, Holz, and
  Fritz]{greshake2023not}
Kai Greshake, Sahar Abdelnabi, Shailesh Mishra, Christoph Endres, Thorsten
  Holz, and Mario Fritz.
\newblock Not what you’ve signed up for: Compromising real-world
  llm-integrated applications with indirect prompt injection.
\newblock \emph{arXiv preprint arXiv:2302.12173}, 2023.

\bibitem[Hartvigsen et~al.(2022)Hartvigsen, Gabriel, Palangi, Sap, Ray, and
  Kamar]{hartvigsen2022toxigen}
Thomas Hartvigsen, Saadia Gabriel, Hamid Palangi, Maarten Sap, Dipankar Ray,
  and Ece Kamar.
\newblock Toxigen: A large-scale machine-generated dataset for adversarial and
  implicit hate speech detection.
\newblock \emph{arXiv preprint arXiv:2203.09509}, 2022.

\bibitem[Hendrycks et~al.(2021)Hendrycks, Burns, Kadavath, Arora, Basart, Tang,
  Song, and Steinhardt]{math}
Dan Hendrycks, Collin Burns, Saurav Kadavath, Akul Arora, Steven Basart, Eric
  Tang, Dawn Song, and Jacob Steinhardt.
\newblock Measuring mathematical problem solving with the math dataset.
\newblock \emph{arXiv preprint arXiv:2103.03874}, 2021.

\bibitem[Hoffmann et~al.(2022)Hoffmann, Borgeaud, Mensch, Buchatskaya, Cai,
  Rutherford, de~Las~Casas, Hendricks, Welbl, Clark, Hennigan, Noland,
  Millican, van~den Driessche, Damoc, Guy, Osindero, Simonyan, Elsen, Rae,
  Vinyals, and Sifre]{hoffmann2022training}
Jordan Hoffmann, Sebastian Borgeaud, Arthur Mensch, Elena Buchatskaya, Trevor
  Cai, Eliza Rutherford, Diego de~Las~Casas, Lisa~Anne Hendricks, Johannes
  Welbl, Aidan Clark, Tom Hennigan, Eric Noland, Katie Millican, George van~den
  Driessche, Bogdan Damoc, Aurelia Guy, Simon Osindero, Karen Simonyan, Erich
  Elsen, Jack~W. Rae, Oriol Vinyals, and Laurent Sifre.
\newblock Training compute-optimal large language models, 2022.

\bibitem[Hutto and Gilbert(2014)]{hutto2014vader}
Clayton Hutto and Eric Gilbert.
\newblock Vader: A parsimonious rule-based model for sentiment analysis of
  social media text.
\newblock In \emph{Proceedings of the international AAAI conference on web and
  social media}, volume~8, pages 216--225, 2014.

\bibitem[Ji et~al.(2023)Ji, Lee, Frieske, Yu, Su, Xu, Ishii, Bang, Madotto, and
  Fung]{ji2023survey}
Ziwei Ji, Nayeon Lee, Rita Frieske, Tiezheng Yu, Dan Su, Yan Xu, Etsuko Ishii,
  Ye~Jin Bang, Andrea Madotto, and Pascale Fung.
\newblock Survey of hallucination in natural language generation.
\newblock \emph{ACM Computing Surveys}, 55\penalty0 (12):\penalty0 1--38, 2023.

\bibitem[Joshi et~al.(2017)Joshi, Choi, Weld, and Zettlemoyer]{triviaqa}
Mandar Joshi, Eunsol Choi, Daniel~S Weld, and Luke Zettlemoyer.
\newblock Triviaqa: A large scale distantly supervised challenge dataset for
  reading comprehension.
\newblock \emph{arXiv preprint arXiv:1705.03551}, 2017.

\bibitem[Kaplan et~al.(2020)Kaplan, McCandlish, Henighan, Brown, Chess, Child,
  Gray, Radford, Wu, and Amodei]{kaplan2020scaling}
Jared Kaplan, Sam McCandlish, Tom Henighan, Tom~B. Brown, Benjamin Chess, Rewon
  Child, Scott Gray, Alec Radford, Jeffrey Wu, and Dario Amodei.
\newblock Scaling laws for neural language models, 2020.

\bibitem[Ko{\v{c}}isk{\'y} et~al.(2018)Ko{\v{c}}isk{\'y}, Schwarz, Blunsom,
  Dyer, Hermann, Melis, and Grefenstette]{narrativeqa}
Tom{\'a}{\v{s}} Ko{\v{c}}isk{\'y}, Jonathan Schwarz, Phil Blunsom, Chris Dyer,
  Karl~Moritz Hermann, G{\'a}bor Melis, and Edward Grefenstette.
\newblock The {N}arrative{QA} reading comprehension challenge.
\newblock \emph{Transactions of the Association for Computational Linguistics},
  6:\penalty0 317--328, 2018.
\newblock \doi{10.1162/tacl_a_00023}.
\newblock URL \url{https://aclanthology.org/Q18-1023}.

\bibitem[K{\"o}pf et~al.(2023)K{\"o}pf, Kilcher, von R{\"u}tte, Anagnostidis,
  Tam, Stevens, Barhoum, Duc, Stanley, Nagyfi, et~al.]{oasis}
Andreas K{\"o}pf, Yannic Kilcher, Dimitri von R{\"u}tte, Sotiris Anagnostidis,
  Zhi-Rui Tam, Keith Stevens, Abdullah Barhoum, Nguyen~Minh Duc, Oliver
  Stanley, Rich{\'a}rd Nagyfi, et~al.
\newblock Openassistant conversations--democratizing large language model
  alignment.
\newblock \emph{arXiv preprint arXiv:2304.07327}, 2023.

\bibitem[Kwiatkowski et~al.(2019)Kwiatkowski, Palomaki, Redfield, Collins,
  Parikh, Alberti, Epstein, Polosukhin, Devlin, Lee, et~al.]{natural_questions}
Tom Kwiatkowski, Jennimaria Palomaki, Olivia Redfield, Michael Collins, Ankur
  Parikh, Chris Alberti, Danielle Epstein, Illia Polosukhin, Jacob Devlin,
  Kenton Lee, et~al.
\newblock Natural questions: a benchmark for question answering research.
\newblock \emph{Transactions of the Association for Computational Linguistics},
  7:\penalty0 453--466, 2019.

\bibitem[Lin et~al.(2021)Lin, Hilton, and Evans]{lin2021truthfulqa}
Stephanie Lin, Jacob Hilton, and Owain Evans.
\newblock Truthfulqa: Measuring how models mimic human falsehoods.
\newblock \emph{arXiv preprint arXiv:2109.07958}, 2021.

\bibitem[Liu et~al.(2019)Liu, Ott, Goyal, Du, Joshi, Chen, Levy, Lewis,
  Zettlemoyer, and Stoyanov]{liu2019roberta}
Yinhan Liu, Myle Ott, Naman Goyal, Jingfei Du, Mandar Joshi, Danqi Chen, Omer
  Levy, Mike Lewis, Luke Zettlemoyer, and Veselin Stoyanov.
\newblock Roberta: A robustly optimized bert pretraining approach.
\newblock \emph{arXiv preprint arXiv:1907.11692}, 2019.

\bibitem[Mihaylov et~al.(2018)Mihaylov, Clark, Khot, and Sabharwal]{OpenbookQA}
Todor Mihaylov, Peter Clark, Tushar Khot, and Ashish Sabharwal.
\newblock Can a suit of armor conduct electricity? a new dataset for open book
  question answering.
\newblock \emph{arXiv preprint arXiv:1809.02789}, 2018.

\bibitem[Mohtashami and Jaggi(2023)]{passkey}
Amirkeivan Mohtashami and Martin Jaggi.
\newblock Landmark attention: Random-access infinite context length for
  transformers.
\newblock \emph{arXiv preprint arXiv:2305.16300}, 2023.

\bibitem[MosaicML(2023{\natexlab{a}})]{MPT30b}
MosaicML.
\newblock Introducing mpt-30b: Raising the bar for open-source foundation
  models, 2023{\natexlab{a}}.
\newblock URL \url{www.mosaicml.com/blog/mpt-30b}.
\newblock Accessed: 2023-06-22.

\bibitem[MosaicML(2023{\natexlab{b}})]{MPT7b}
MosaicML.
\newblock Introducing mpt-7b: A new standard for open-source, ly usable llms,
  2023{\natexlab{b}}.
\newblock URL \url{www.mosaicml.com/blog/mpt-7b}.

\bibitem[Narayanan et~al.(2021)Narayanan, Shoeybi, Casper, LeGresley, Patwary,
  Korthikanti, Vainbrand, Kashinkunti, Bernauer, Catanzaro, et~al.]{megatron}
Deepak Narayanan, Mohammad Shoeybi, Jared Casper, Patrick LeGresley, Mostofa
  Patwary, Vijay Korthikanti, Dmitri Vainbrand, Prethvi Kashinkunti, Julie
  Bernauer, Bryan Catanzaro, et~al.
\newblock Efficient large-scale language model training on gpu clusters using
  megatron-lm.
\newblock In \emph{Proceedings of the International Conference for High
  Performance Computing, Networking, Storage and Analysis}, pages 1--15, 2021.

\bibitem[Nijkamp et~al.(2023)Nijkamp, Xie, Hayashi, Pang, Xia, Xing, Vig,
  Yavuz, Laban, Krause, et~al.]{xgen}
Erik Nijkamp, Tian Xie, Hiroaki Hayashi, Bo~Pang, Congying Xia, Chen Xing,
  Jesse Vig, Semih Yavuz, Philippe Laban, Ben Krause, et~al.
\newblock Long sequence modeling with xgen: A 7b llm trained on 8k input
  sequence length.
\newblock \emph{Salesforce AI Research Blog}, 2023.

\bibitem[OpenAI(2023)]{gpt-4}
OpenAI.
\newblock Gpt-4 technical report, 2023.

\bibitem[Ouyang et~al.(2022)Ouyang, Wu, Jiang, Almeida, Wainwright, Mishkin,
  Zhang, Agarwal, Slama, Ray, et~al.]{rlhf}
Long Ouyang, Jeffrey Wu, Xu~Jiang, Diogo Almeida, Carroll Wainwright, Pamela
  Mishkin, Chong Zhang, Sandhini Agarwal, Katarina Slama, Alex Ray, et~al.
\newblock Training language models to follow instructions with human feedback.
\newblock \emph{Advances in Neural Information Processing Systems},
  35:\penalty0 27730--27744, 2022.

\bibitem[Pang et~al.(2022)Pang, Parrish, Joshi, Nangia, Phang, Chen,
  Padmakumar, Ma, Thompson, He, and Bowman]{quality}
Richard~Yuanzhe Pang, Alicia Parrish, Nitish Joshi, Nikita Nangia, Jason Phang,
  Angelica Chen, Vishakh Padmakumar, Johnny Ma, Jana Thompson, He~He, and
  Samuel Bowman.
\newblock {Q}u{ALITY}: Question answering with long input texts, yes!
\newblock In \emph{Proceedings of the 2022 Conference of the North American
  Chapter of the Association for Computational Linguistics: Human Language
  Technologies}, pages 5336--5358, Seattle, United States, July 2022.
  Association for Computational Linguistics.
\newblock \doi{10.18653/v1/2022.naacl-main.391}.
\newblock URL \url{https://aclanthology.org/2022.naacl-main.391}.

\bibitem[Peng et~al.(2023)Peng, Quesnelle, Fan, and Shippole]{yarn}
Bowen Peng, Jeffrey Quesnelle, Honglu Fan, and Enrico Shippole.
\newblock Yarn: Efficient context window extension of large language models,
  2023.

\bibitem[r/LocalLLaMa()]{ntk_link}
r/LocalLLaMa.
\newblock {NTK-Aware Scaled RoPE} allows llama models to have extended (8k+)
  context size without any fine-tuning and minimal perplexity degradation.
\newblock
  \url{https://www.reddit.com/r/LocalLLaMA/comments/14lz7j5/ntkaware_scaled_rope_allows_llama_models_to_have/}.
\newblock Accessed: 2023-08-25.

\bibitem[Rozière et~al.(2023)Rozière, Gehring, Gloeckle, Sootla, Gat, Tan,
  Adi, Liu, Remez, Rapin, Kozhevnikov, Evtimov, Bitton, Bhatt, Ferrer,
  Grattafiori, Xiong, Défossez, Copet, Azhar, Touvron, Martin, Usunier,
  Scialom, and Synnaeve]{codellama}
Baptiste Rozière, Jonas Gehring, Fabian Gloeckle, Sten Sootla, Itai Gat,
  Xiaoqing~Ellen Tan, Yossi Adi, Jingyu Liu, Tal Remez, Jérémy Rapin, Artyom
  Kozhevnikov, Ivan Evtimov, Joanna Bitton, Manish Bhatt, Cristian~Canton
  Ferrer, Aaron Grattafiori, Wenhan Xiong, Alexandre Défossez, Jade Copet,
  Faisal Azhar, Hugo Touvron, Louis Martin, Nicolas Usunier, Thomas Scialom,
  and Gabriel Synnaeve.
\newblock Code llama: Open foundation models for code, 2023.

\bibitem[Sakaguchi et~al.(2021)Sakaguchi, Bras, Bhagavatula, and
  Choi]{winogrande}
Keisuke Sakaguchi, Ronan~Le Bras, Chandra Bhagavatula, and Yejin Choi.
\newblock Winogrande: An adversarial winograd schema challenge at scale.
\newblock \emph{Communications of the ACM}, 64\penalty0 (9):\penalty0 99--106,
  2021.

\bibitem[Sap et~al.(2019)Sap, Rashkin, Chen, LeBras, and Choi]{siqa}
Maarten Sap, Hannah Rashkin, Derek Chen, Ronan LeBras, and Yejin Choi.
\newblock Socialiqa: Commonsense reasoning about social interactions.
\newblock \emph{arXiv preprint arXiv:1904.09728}, 2019.

\bibitem[Shaham et~al.(2023)Shaham, Ivgi, Efrat, Berant, and Levy]{zeroscrolls}
Uri Shaham, Maor Ivgi, Avia Efrat, Jonathan Berant, and Omer Levy.
\newblock Zeroscrolls: A zero-shot benchmark for long text understanding, 2023.

\bibitem[Su et~al.(2022)Su, Lu, Pan, Murtadha, Wen, and Liu]{rope}
Jianlin Su, Yu~Lu, Shengfeng Pan, Ahmed Murtadha, Bo~Wen, and Yunfeng Liu.
\newblock Roformer: Enhanced transformer with rotary position embedding, 2022.

\bibitem[Sun et~al.(2022)Sun, Dong, Patra, Ma, Huang, Benhaim, Chaudhary, Song,
  and Wei]{xpos}
Yutao Sun, Li~Dong, Barun Patra, Shuming Ma, Shaohan Huang, Alon Benhaim,
  Vishrav Chaudhary, Xia Song, and Furu Wei.
\newblock A length-extrapolatable transformer, 2022.

\bibitem[Talmor et~al.(2018)Talmor, Herzig, Lourie, and Berant]{commonsenseqa}
Alon Talmor, Jonathan Herzig, Nicholas Lourie, and Jonathan Berant.
\newblock Commonsenseqa: A question answering challenge targeting commonsense
  knowledge.
\newblock \emph{arXiv preprint arXiv:1811.00937}, 2018.

\bibitem[Together(2023)]{together}
Together.
\newblock Llama-2-7b-32k-instruct — and fine-tuning for llama-2 models with
  together api, 2023.
\newblock URL \url{https://together.ai/blog/llama-2-7b-32k-instruct}.

\bibitem[Touvron et~al.(2023)Touvron, Martin, Stone, Albert, Almahairi, Babaei,
  Bashlykov, Batra, Bhargava, Bhosale, et~al.]{llama2}
Hugo Touvron, Louis Martin, Kevin Stone, Peter Albert, Amjad Almahairi, Yasmine
  Babaei, Nikolay Bashlykov, Soumya Batra, Prajjwal Bhargava, Shruti Bhosale,
  et~al.
\newblock Llama 2: Open foundation and fine-tuned chat models.
\newblock \emph{arXiv preprint arXiv:2307.09288}, 2023.

\bibitem[Tworkowski et~al.(2023{\natexlab{a}})Tworkowski, Staniszewski, Pacek,
  Wu, Michalewski, and Miłoś]{focused}
Szymon Tworkowski, Konrad Staniszewski, Mikołaj Pacek, Yuhuai Wu, Henryk
  Michalewski, and Piotr Miłoś.
\newblock Focused transformer: Contrastive training for context scaling,
  2023{\natexlab{a}}.

\bibitem[Tworkowski et~al.(2023{\natexlab{b}})Tworkowski, Staniszewski, Pacek,
  Wu, Michalewski, and Miłoś]{long_llama}
Szymon Tworkowski, Konrad Staniszewski, Mikołaj Pacek, Yuhuai Wu, Henryk
  Michalewski, and Piotr Miłoś.
\newblock Focused transformer: Contrastive training for context scaling,
  2023{\natexlab{b}}.

\bibitem[Wang et~al.(2022)Wang, Kordi, Mishra, Liu, Smith, Khashabi, and
  Hajishirzi]{selfinst}
Yizhong Wang, Yeganeh Kordi, Swaroop Mishra, Alisa Liu, Noah~A Smith, Daniel
  Khashabi, and Hannaneh Hajishirzi.
\newblock Self-instruct: Aligning language model with self generated
  instructions.
\newblock \emph{arXiv preprint arXiv:2212.10560}, 2022.

\bibitem[Zellers et~al.(2019)Zellers, Holtzman, Bisk, Farhadi, and
  Choi]{hellaswag}
Rowan Zellers, Ari Holtzman, Yonatan Bisk, Ali Farhadi, and Yejin Choi.
\newblock Hellaswag: Can a machine really finish your sentence?, 2019.

\bibitem[Zhong et~al.(2021)Zhong, Yin, Yu, Zaidi, Mutuma, Jha, Awadallah,
  Celikyilmaz, Liu, Qiu, and Radev]{qmsum}
Ming Zhong, Da~Yin, Tao Yu, Ahmad Zaidi, Mutethia Mutuma, Rahul Jha,
  Ahmed~Hassan Awadallah, Asli Celikyilmaz, Yang Liu, Xipeng Qiu, and Dragomir
  Radev.
\newblock {QMS}um: A new benchmark for query-based multi-domain meeting
  summarization.
\newblock In \emph{Proceedings of the 2021 Conference of the North American
  Chapter of the Association for Computational Linguistics: Human Language
  Technologies}, pages 5905--5921, Online, June 2021. Association for
  Computational Linguistics.
\newblock \doi{10.18653/v1/2021.naacl-main.472}.
\newblock URL \url{https://aclanthology.org/2021.naacl-main.472}.

\end{thebibliography}
}

\clearpage
\appendix

\section{More Results}
\label{sec:more_results}

\begin{table}[!ht]
  \small
  \centering
  \begin{adjustbox}{max width=1.2\linewidth, center}
      \begin{tabular}{lccccc}
        \toprule
        \multirow{2}{*}{\textbf{Model}}  &  \multirow{2}{*}{Prompt length} & NarrativeQA  & Qasper  & QuALITY & QMSum \\
        & &  F1 (0-shot) & F1 (2-shot) & EM (2-shot) & ROUGE-geo$^*$ (1-shot) \\
        \midrule

        Yarn-7B-128k & 16k & 20.9 & 26.2 & 32.3 & 11.4 \\
        Together-7B-32k & 16k  & 23.3 & 27.3 & 41.2 & 12.6\\
        Yarn-13B-128k & 16k & 23.4 & 27.1 & 46.4 & 11.9 \\
        Yarn-7B-128k  & 32k & 24.0 & 26.2 & 30.4 & 13.6 \\
        Together-7B-32k & 32k & 24.7 & 27.3 & 41.3 & 14.2\\
        Yarn-13B-128k & 32k & 25.5 & 27.1 & 48.0 & 13.8 \\
        \midrule

        \model 7B & 16k   & 21.9 & 27.8 & 43.2 & 14.9 \\
        \model 13B & 16k & 25.6 & 31.2 & 57.6 & 15.7 \\ 
        \model 7B & 32k & 24.4 & 28.7 & 43.6 & 15.9 \\ 
        \model 13B & 32k & \textbf{27.4} & \textbf{31.6} & \textbf{59.0} & \textbf{17.0} \\
        \bottomrule
      \end{tabular}
  \end{adjustbox}
  \vspace{5pt}
  \caption{Comparison of our models with open-source long-context models on research benchmarks \textit{using a maximum prompt length of 32,768 tokens}.
  }
  \label{table:compare_with_oss_32k}
\end{table}

\begin{table}[h]
  
  \centering
  \begin{tabular}{lcccc}
    \toprule
    \textbf{Model} & Humanities & STEM  & Social Sciences & Other \\
    \midrule
    \model 7B  & 54.8 & 35.7 & 58.4 & 53.2\\
    \model 13B & 69.0 & 44.4 & 71.3 & 65.8\\
    \model 34B & 73.5 & 49.9 & 78.4 & 69.3 \\
    \model 70B & 80.1 & 55.5 & 84.4 & 74.9 \\
    \bottomrule
  \end{tabular}
  \vspace{5pt}
  \caption{Decomposed MMLU results.}
  \label{table:mmlu_decomposed_results}
\end{table}

\begin{table}[h]
  \centering
  \begin{tabular}{l|cc|cc|cc}
    \toprule
    \textbf{Model} & HumanEval & MBPP & MATH & GSM8k & NQ & TQA \\
    \midrule
    \model 7B  & 18.3 & 23.0 & 4.22 & 16.8 & 27.5 & 74.4\\
    \model 13B & 19.5 & 31.8 & 8.38 & 34.6 & 32.5 & 81.1\\
    \model 34B & 22.6 & 37.2 & 10.6 & 47.4 & 35.0 & 85.6\\
    \model 70B & 32.9 & 46.8 & 17.2 & 65.4 & 39.8 & 88.2\\
    \bottomrule
  \end{tabular}
  \vspace{5pt}
  \caption{Results on HumanEval (0-shot), MBPP (3-shot), MATH (4-shot), GSM8K (8-shot), NaturalQuestions (5-shot) and TriviaQA-wiki (5-shot).}
  \label{table:mmlu_decomposed_results}
\end{table}

\begin{table}[h!]
  \small
  \centering
  \begin{tabular}{lcccccccc}
    \toprule
    \textbf{Model} & PIQA & SIQA & HellaSwag & WinoGrande & ARC-e & ARC-c & OBQA & CSQA \\
    \midrule
    \model 7B  & 78.9 & 48.7 & 77.8 & 70.4 & 76.2 & 52.0 & 59.0 & 61.0 \\
    \model 13B & 81.6 & 50.7 & 81.2 & 74.1 & 77.7 & 51.4 & 55.6 & 70.4 \\
    \model 34B & 82.6 & 51.7 & 83.8 & 77.5 & 79.7 & 54.8 & 60.2 & 77.0 \\
    \model 70B & 83.3 & 52.8 & 85.7 & 79.6 & 80.3 & 58.4 & 59.6 & 81.9 \\
    \bottomrule
  \end{tabular}
  \vspace{5pt}
  \caption{Commonsense reasoning decomposed results. We use the same number of shots and evaluation metrics for all tasks as \llamavtwo. }
  \label{table:commonsense_decomposed_results}
\end{table}



\begin{table*}[ht]
  
  \small
  \centering
  \begin{adjustbox}{max width=1.2\linewidth, center}
  \begin{tabular}{l|ccccccc}
    \toprule
    \textbf{Model} & Coursera & TPO & TopicRetrieval & FinQA & ContractQA & NaturalQuestions \\
    \midrule
    Claude 1.3 100k      & 60.2 & 83.6 & 70.6 & - & - & -\\
    gpt-3.5-turbo-16k    & 59.7 & 69.9 & 69.3 & 45.4 & 24.9 & 45.9 \\
    \midrule
    \multicolumn{2}{l}{\textit{Best open models reported in \citet{leval}}}      \\
    longchat-13b-16k & 36.8 & 55.4 & 33.3 & 37.9 & 21.1 & 22.8 \\
    chatglm2-6b-8k & 47.2 & 54.6 & 10.0 & 34.8 & 16.4 & 17.6 \\
    \midrule
    \modelchat     & 52.9 & \underline{81.8} & \underline{76.0} & \underline{47.3} & \underline{25.5} & \underline{66.7} \\
    \bottomrule

  \end{tabular}
  \end{adjustbox}
      \caption{Evaluation on additional long-context tasks from L-Eval. We report the official metrics defined in \citet{leval} and the results of compared models are directly token from the paper.}
    \label{table:long_eval_results}
\end{table*}

\begin{figure}[h!]
    \centering
    \includegraphics[width=0.55\linewidth]{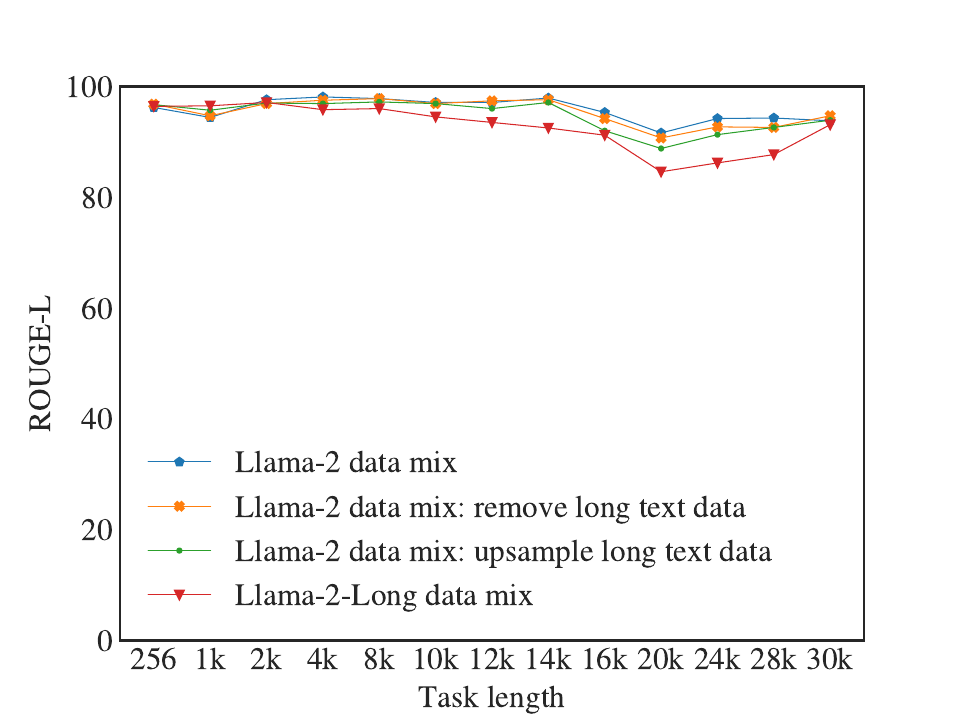}
    \caption{ \textsc{first-sentence-retrieval} performance of models trained with different data mixes.}
    \label{fig:data_ablation_synthetic}
\end{figure}

\section{Theoretical Analysis of Positional Encodings}
\label{sec:app_detailed_results}

RoPE maps an argument vector $x\in \mathbb R^d$ into the embedding curve on a sphere in $\mathbb C^{d/2}$ parametrized by a real parameter $t\in \mathbb R$ and ``base frequency'' $b$:

\begin{equation*}
f^{RoPE}(x, t)_j = \left(x_{2j} + i x_{2j+1}\right)e^{ib^{-\frac{2j}{d}}t}.    
\end{equation*}

The purpose of this mapping is to help the attention module to separate the vectors corresponding to two instances of the same token that are situated at different positions in the input sequence. 

Aiming at extending the sequence length of a transformer pretrained with a particular positional embedding $f$ from $L$ to $\hat L,$ we would like to come up with a positional embedding $\hat f$ that minimizes the distance between the old and the new images of the embedded vectors:
\[d(f, \hat f) = \max_{x\in \mathcal X}\min_{k\in \{0,..N-1\}~j\in \{0,..\hat N -1\}}\text{dist}[f(x, k), \hat f (x, j)],\]
where $\mathcal X \subset \mathbb R^d$ is the set of vectors that would need to be positionally embedded. \citep{pi} computed this distance through the magnitude of the attention scores, but still argued for the efficiency of their method ``position interpolation'') due to its reduced value of the distance to the original RoPE images when compared to the naive extrapolation of the positional embedding.

With this in mind, we consider two different methods to extend the sequence length of a trained transformer: Position Interpolation (PI) parameterized with $\alpha$,  and Adjusted Base Frequency (ABF) parameterized with $\beta.$ These two methods correspond to the following embedding curves:

\begin{align*} 
f^{RoPE+PI}(x,t)_j &= \left(x_{2j} +i x_{2j+1}\right)e^{i\alpha \cdot (b^{-\frac{2j}{d}})t } \\
f^{RoPE+ABF}(x,t)_j &= \left(x_{2j} + i x_{2j+1}\right)e^{i(\beta b)^{-\frac{2j}{d}}t}
\end{align*}

Evaluating a positional embedding a-priori, we should consider the degree of granularity with which the embedding images are being distributed over the embedding space. Comparing alternative positional embeddings $\hat f$ mapping $\mathbb R^d \times \mathbb N$ into $\mathbb C^{d/2},$  we should prefer the one with the maximal value of the distance between the two closest images:
\[q(\hat f) = \min_{x\in \mathcal X; k\ne j \in \{0..\hat N-1\} }\text{dist}[\hat f(x, k), \hat f(x, j)].\]
This leaves us with a multi-objective decision selecting the positional embedding for a model with extended context: on one hand, $\hat f$ should be chosen so that it minimizes $d(f, \hat f),$ while on the other hand its value of $q(\hat f)$ should be big enough.


Before proceeding to the explanation on how we make this multi-objective decision, we would like to provide a geometric intuition for the positional embeddings considered here.
While it is difficult to visualize a mapping $\mathbb R^d \times \mathbb N \to \mathbb C^{d/2},$ we can consider $x \in \mathbb R^d $ to be fixed and visualize the projection $\mathbb R \to \mathbb R^3.$ To get the intuition behind PI and ABF, let us consider the helix that is formed by Re$\left[f^{RoPE}(x, t)_0\right]$, Im$\left[f^{RoPE}(x, t)_0\right]$ and Re$\left[f^{RoPE}(x, t)_j\right].$ The example on the Figure \ref{fig:RoPE} depicts a black helix line given with the system
\[x = \textrm{cos } t;
y = \textrm{sin } t;
z = \textrm{sin } at.\]
The red dots on the line correspond to $11$ integer values of $t.$ 

Figure \ref{fig:RoPE_PI} aims to illustrate the impact of Position Interpolation on the relative position of the mapped vectors. The distance between the consecutive points got reduced considerably compered to Figure \ref{fig:RoPE}. The impact of Adjusted Base Frequency is illustrated on Figure \ref{fig:RoPE_BA}. The distance between the consecutive points remained almost the same as on Figure \ref{fig:RoPE}, although the minimal distance between points got considerably reduced due to the increased frequency of the helix. This effect of increased frequency of the helix would be reduced in the high dimension setting. The value of the coefficient $a$ for the helix depicted on Figure \ref{fig:RoPE} is two times larger than the value of the coefficient $a$ for the helix depicted on Figure \ref{fig:RoPE_BA}. If the dimension of the input of the attention mechanism is $d=128,$ then the difference between $\theta_1 = b^{-\frac{2j}{d}}$ at $b=10,000$ and $\theta_1 = b^{-\frac{2j}{d}}$ at $b=500,000$ is only $6\%.$ Thus, we further focus specifically on the distance between the consecutive images of the embeddings.

\begin{figure}
    \centering
    \begin{subfigure}[b]{0.3\textwidth}
         \centering
         \includegraphics[width=1\linewidth]{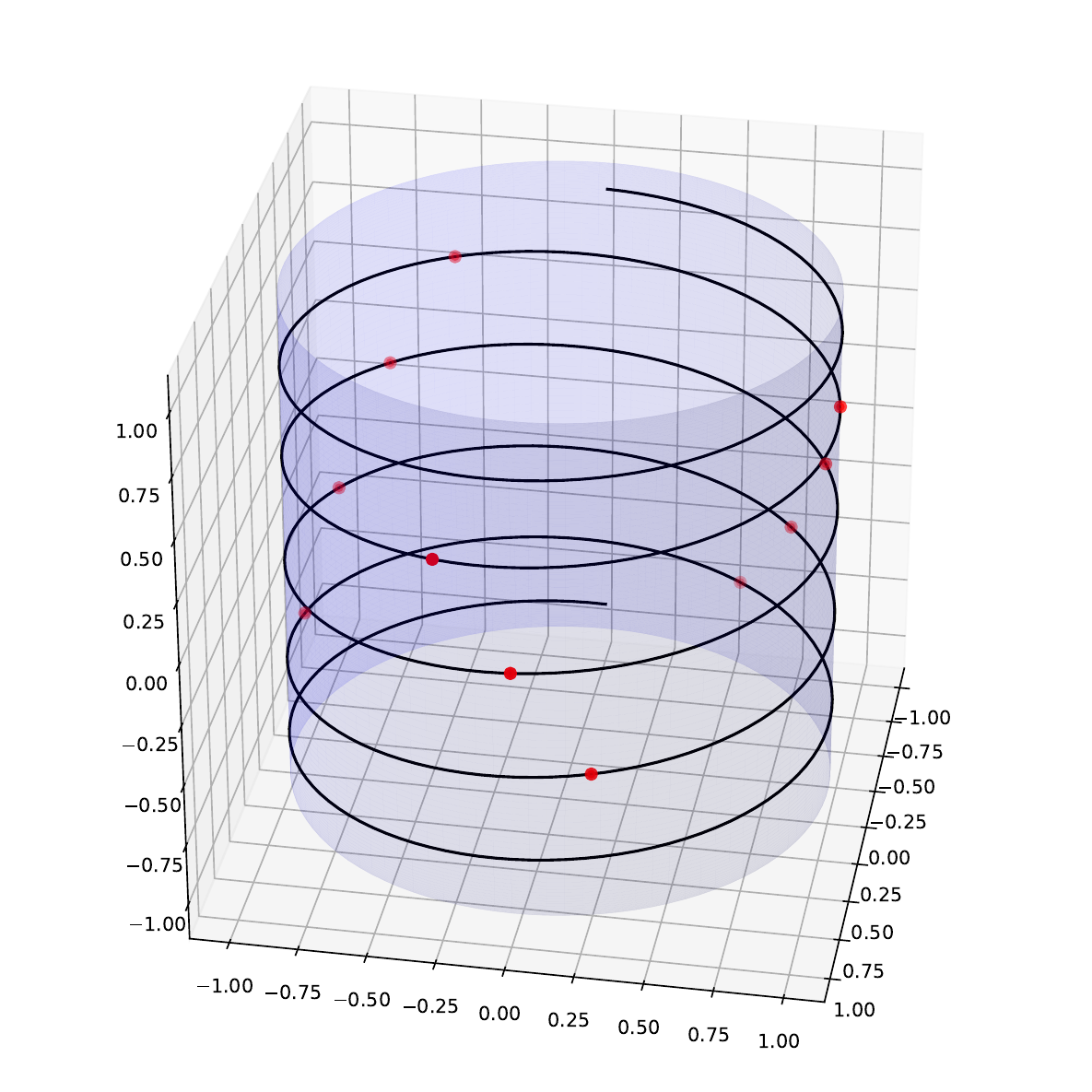}
         \caption{RoPE}
         \label{fig:RoPE}
     \end{subfigure}
     \begin{subfigure}[b]{0.3\textwidth}
         \centering
         \includegraphics[width=1\linewidth]{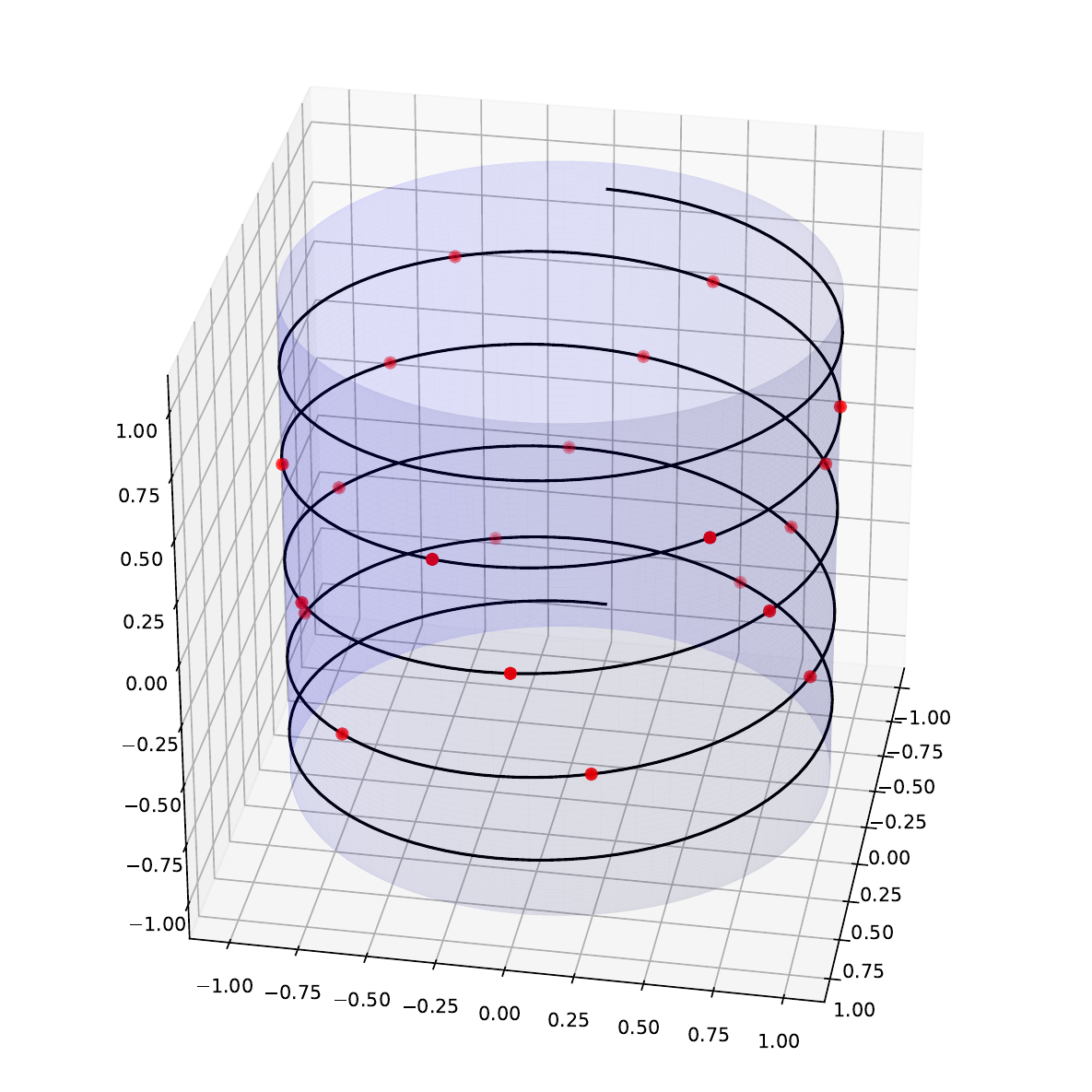}
        \caption{RoPE+PI}
    \label{fig:RoPE_PI}
     \end{subfigure}
     \begin{subfigure}[b]{0.3\textwidth}
         \centering
         \includegraphics[width=1\linewidth]{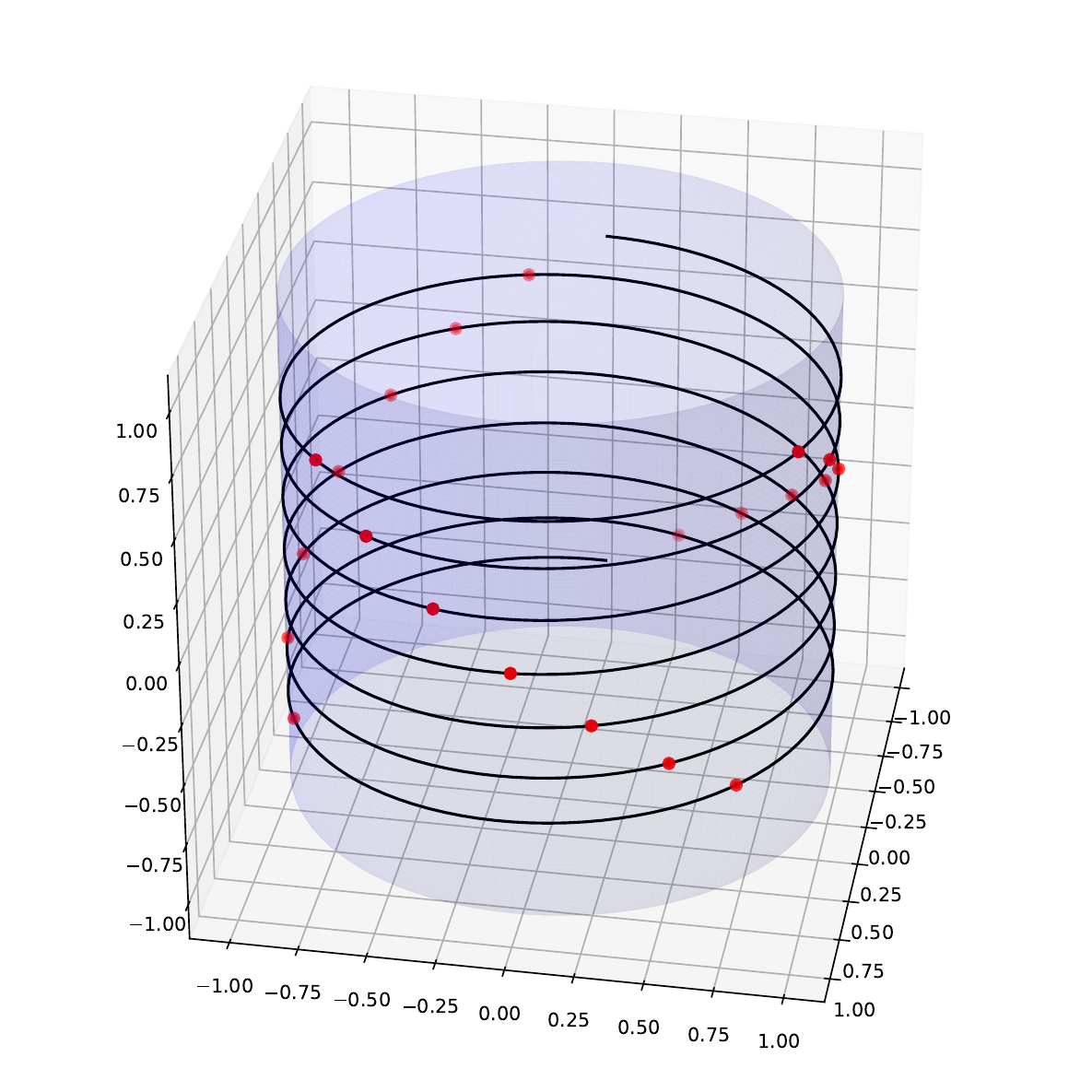}
        \caption{RoPE+ABF}
    \label{fig:RoPE_BA}
     \end{subfigure}
\caption{RoPE variants visualization as helices.}
\end{figure}

We make a formal comparison between Positional Interpolation and Adjusted Base Frequency by analytically comparing the pairwise distances between the images given by $f^{RoPE+PI}$ and $f^{RoPE+ABF}$ for consecutive integer values of $t$. This corresponds to the evaluation of $q(\hat f)$ discussed earlier.
We will measure the distance between embedding images in terms of the Euclidean sine similarity metric since all versions of RoPE are norm-preserving.
\[\sin \angle (a, b) = \frac{Im \langle a, b\rangle}{\|a\|\|b\|}\]

The following result states that in a high-dimensional space, the sine similarity $\sin \angle (f^{RoPE + ABF}(x, n+1), f^{RoPE + ABF}(x, n))$ between two consecutive embedding images of a vector $x$ can be bounded with a value proportional to $(\log b + \log \beta)^{-1}.$ Moreover, the similarity $\sin \angle (f^{RoPE + PI}(x, n+1), f^{RoPE + PI}(x, n))$ can be bounded using $\alpha(\log b)^{-1}.$ 

\begin{theorem}
    For $x \in \mathbb R^d $ and $n \in \mathbb N,$ the Euclidean sine similarity between the two consecutive images of a positional embedding can be bounded as

    $\frac{\min_k x_k^2}{\|x\|^2} C_d \le \sin \angle (f(x, n+1), f(x, n)) \le \frac{\max_k x_k^2}{\|x\|^2} C_d$

    where $\lim_{d\to \infty}C_d \approx \begin{cases}
        (\log b + \log \beta)^{-1} \text{ if } f = f^{RoPE + ABF} \\
        \alpha (\log b)^{-1} \text{ if } f = f^{RoPE + PI}
    \end{cases}$  under the assumptions of $\alpha \ll 1$ and $b\gg 1.$
\end{theorem}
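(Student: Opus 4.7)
The plan is to reduce the sine-similarity expression to a weighted sum over the $d/2$ frequency components of the embedding, to bound the coefficients pointwise by the extreme squared coordinates of $x$, and then to evaluate the resulting frequency sum asymptotically via a Riemann-integral approximation.

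I would first observe that each RoPE variant acts on $x$ as a block-diagonal isometry whose $j$-th $2\times 2$ block rotates $(x_{2j},x_{2j+1})$ by an angle $\theta_j t$, with $\theta_j=(\beta b)^{-2j/d}$ for ABF and $\theta_j=\alpha b^{-2j/d}$ for PI. Because the map is an isometry, $\|f(x,n)\|=\|x\|$, so the denominator of $\sin\angle$ equals $\|x\|^2$, and the only real work is the complex inner product
\[\langle f(x,n+1),f(x,n)\rangle \;=\; \sum_{j=0}^{d/2-1}\bigl(x_{2j}^2+x_{2j+1}^2\bigr)\,e^{i\theta_j},\]
in which the position $n$ drops out. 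Taking the imaginary part and dividing by $\|x\|^2$ yields
\[\sin\angle\bigl(f(x,n+1),f(x,n)\bigr) \;=\; \frac{1}{\|x\|^2}\sum_{j=0}^{d/2-1}\bigl(x_{2j}^2+x_{2j+1}^2\bigr)\sin\theta_j.\]
Every $\theta_j$ lies in $(0,1]$ under the stated assumptions (with $\alpha\ll 1$ handling PI and $(\beta b)^{-2j/d}\le 1$ handling ABF), so each $\sin\theta_j$ is positive; bounding the coefficient via $2\min_k x_k^2\le x_{2j}^2+x_{2j+1}^2\le 2\max_k x_k^2$ and pulling the extremal factors outside the sum isolates a purely $x$-independent quantity $C_d$ proportional to $\sum_{j=0}^{d/2-1}\sin\theta_j$, giving the two-sided bound claimed in the statement.

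The central step is then evaluating $C_d$ as $d\to\infty$ by recognizing the sum as a Riemann approximation of the integral $\int_0^1\sin\theta(u)\,du$ where $\theta(u)=(\beta b)^{-u}$ for ABF and $\theta(u)=\alpha b^{-u}$ for PI. For ABF, the substitution $v=(\beta b)^{-u}$ converts the integral into
\[\frac{1}{\log(\beta b)}\int_{1/(\beta b)}^{1}\frac{\sin v}{v}\,dv,\]
and under $b\gg 1$ the lower limit can be pushed to $0$ with error $O\!\bigl(1/(\beta b)\bigr)$, so the inner integral collapses to the constant $\operatorname{Si}(1)$ and $C_d$ inherits the $(\log b+\log\beta)^{-1}$ scaling. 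For PI, the hypothesis $\alpha\ll 1$ lets me linearize $\sin(\alpha b^{-u})\approx\alpha b^{-u}$ with negligible error, after which $\alpha\int_0^1 b^{-u}\,du=\alpha(1-b^{-1})/\log b$ collapses to $\alpha/\log b$ under $b\gg 1$.

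The main obstacle will be controlling the error terms cleanly enough to identify the leading-order asymptotic: the linearization $\sin\theta\approx\theta$ is inaccurate near $j=0$ in the ABF case (where $\theta_0=1$ is not small), and the Riemann-sum error together with the truncation error from replacing $\int_{1/(\beta b)}^{1}$ by $\int_0^1$ must be shown to be subdominant to the leading $1/\log(\beta b)$ term. I would handle this by splitting the sum over $j$ into a short low-frequency head, which contributes via the $\operatorname{Si}$ integral evaluated directly, and a long high-frequency tail where $\sin\theta_j$ is uniformly close to $\theta_j$ and the tail sum can be summed as a geometric series. The resulting two-part argument transparently separates the logarithmic dependence on $\beta$ (emerging from the substitution in the ABF integral) from the linear dependence on $\alpha$ (emerging from the PI linearization), which is precisely the asymmetry asserted in the theorem.
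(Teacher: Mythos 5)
Your proposal follows the same algebraic reduction as the paper up to the point where the sine similarity becomes a weighted average $\|x\|^{-2}\sum_{j}(x_{2j}^2+x_{2j+1}^2)\sin\theta_j$ with position-independent angles $\theta_j$, but then diverges in how the frequency sum is evaluated. The paper squeezes each $\sin\theta_j$ between $\theta_j(1-\theta_j/\pi)$ and $\theta_j$, sums the resulting geometric series in closed form, and passes to the limit $d\to\infty$ term by term; this yields explicit finite-$d$ two-sided bounds whose endpoints pinch to $(\log b+\log\beta)^{-1}$ and $\alpha(\log b)^{-1}$ up to a constant factor between $1-1/\pi$ and $1$. You instead interpret the sum as a Riemann sum of $\int_0^1\sin\theta(u)\,du$, substitute $v=\theta(u)$, and land on $\operatorname{Si}(1)/\log(\beta b)$ for ABF and $\alpha(1-b^{-1})/\log b$ for PI after linearizing $\sin$ in the PI case. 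Your route pins down the exact limiting constant $\operatorname{Si}(1)\approx 0.946$, which the paper's inequalities only bracket, at the cost of having to control the Riemann-sum and truncation errors separately; the paper's route avoids those error estimates entirely but gives only an interval. Both cleanly expose the logarithmic dependence on $\beta$ versus the linear dependence on $\alpha$, which is the substantive content of the theorem. One detail you get right that the paper's written proof actually fumbles: you bound $x_{2j}^2+x_{2j+1}^2$ by $2\min_k x_k^2$ and $2\max_k x_k^2$, whereas the paper drops the factor of $2$ (its displayed inequality $x_{2j}^2+x_{2j+1}^2\le\max_k x_k^2$ is false in general), so your $C_d$ is defined consistently with the bounds it enters. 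Be aware that, like the paper, your sketch is loose about normalization: $\sum_j\sin\theta_j$ itself grows like $d/2$, so the quantity that actually converges to $(\log(\beta b))^{-1}$ is the $\tfrac{2}{d}$-normalized sum, which is what the Riemann-integral interpretation implicitly supplies; you should make that normalization explicit when writing up.
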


\begin{proof}

Let us begin the proof by writing down the expressions for the inner product between two images of RoPE variants.


$\langle f^{RoPE+PI}(x, m), f^{RoPE+PI}(x, n)\rangle = \sum_{j=0}^{\frac{d}{2}-1}\left(x_{2j}^2 + x_{2j+1}^2\right)e^{ib^{-\frac{2j}{d}}\alpha(m-n)}$

$\langle f^{RoPE+ABF}(x, m), f^{RoPE+ABF}(x, n)\rangle = \sum_{j=0}^{\frac{d}{2}-1}\left(x_{2j}^2 + x_{2j+1}^2\right)e^{ib^{-\frac{2j}{d}}\beta^{-\frac{2j}{d}}(m-n)}$

From them, we can derive the expressions for the Euclidean sine similarity between the images of the positional embeddings:


$\sin \angle (f^{RoPE+PI}(x, m), f^{RoPE+PI}(x, n)) =  \frac{\sum_{j=0}^{\frac{d}{2}-1}\left(x_{2j}^2 + x_{2j+1}^2\right)\sin(b^{-\frac{2j}{d}}\alpha(m-n))}{\sum_{j=0}^{d-1} x_j^2} $

$\sin \angle (f^{RoPE+ABF}(x, m), f^{RoPE+ABF}(x, n)) = \frac{\sum_{j=0}^{\frac{d}{2}-1}\left(x_{2j}^2 + x_{2j+1}^2\right)\sin(b^{-\frac{2j}{d}}\beta^{-\frac{2j}{d}}(m-n))}{\sum_{j=0}^{d-1} x_j^2} $

Let's put $m=n+1$ to compare the distance between the two consecutive positional embedding images of the same vector $x.$

$\|x\|^2 \sin \angle (f^{RoPE+PI}(x, n+1), f^{RoPE+PI}(x, n)) = \sum_{j=0}^{\frac{d}{2}-1}\left(x_{2j}^2 + x_{2j+1}^2\right)\sin(b^{-\frac{2j}{d}}\alpha)$

$\|x\|^2 \sin \angle (f^{RoPE+ABF}(x, n+1), f^{RoPE+ABF}(x, n)) = \sum_{j=0}^{\frac{d}{2}-1}\left(x_{2j}^2 + x_{2j+1}^2\right)\sin(b^{-\frac{2j}{d}}\beta^{-\frac{2j}{d}})$
Due to the range of $b, \alpha$ and $\beta$ that is typically considered, we can bound the arguments of the sine functions as  $0<\alpha b^{-\frac{2j}{d}}\le 1$ as well as $0<(\beta b)^{-\frac{2j}{d}} \le 1.$ 
Using that, we derive that $\sin (b^{-\frac{2j}{d}}\beta^{-\frac{2j}{d}})$ and $\sin(b^{-\frac{2j}{d}}\alpha)$ are non-negative as well as $x_j^2$ for any $j\in \{1,\ldots d\}.$  Thus, the following inequalities hold:

\[\sum_{j=0}^{\frac{d}{2}-1}\min_k x_k^2\sin(b^{-\frac{2j}{d}}\beta^{-\frac{2j}{d}}) \le \sum_{j=0}^{\frac{d}{2}-1}\left(x_{2j}^2 + x_{2j+1}^2\right)\sin(b^{-\frac{2j}{d}}\beta^{-\frac{2j}{d}}) \le \sum_{j=0}^{\frac{d}{2}-1}\max_k x_k^2\sin(b^{-\frac{2j}{d}}\beta^{-\frac{2j}{d}}),\]

\[ \sum_{j=0}^{\frac{d}{2}-1}\min_k x_k^2\sin(b^{-\frac{2j}{d}}\alpha) \le \sum_{j=0}^{\frac{d}{2}-1}\left(x_{2j}^2 + x_{2j+1}^2\right)\sin(b^{-\frac{2j}{d}}\alpha) \le \sum_{j=0}^{\frac{d}{2}-1}\max_k x_k^2\sin(b^{-\frac{2j}{d}}\alpha).\]
Carrying $\min_k x_k^2$ and $\max_k x_k^2$ out of the summation signs, we obtain

\[\min_k x_k^2 \sum_{j=0}^{\frac{d}{2}-1}\sin(b^{-\frac{2j}{d}}\beta^{-\frac{2j}{d}}) \le \sum_{j=0}^{\frac{d}{2}-1}\left(x_{2j}^2 + x_{2j+1}^2\right)\sin(b^{-\frac{2j}{d}}\beta^{-\frac{2j}{d}}) \le \max_k x_k^2 \sum_{j=0}^{\frac{d}{2}-1}\sin(b^{-\frac{2j}{d}}\beta^{-\frac{2j}{d}}),\]

\[\min_k x_k^2 \sum_{j=0}^{\frac{d}{2}-1}\sin(b^{-\frac{2j}{d}}\alpha) \le \sum_{j=0}^{\frac{d}{2}-1}\left(x_{2j}^2 + x_{2j+1}^2\right)\sin(b^{-\frac{2j}{d}}\alpha) \le \max_k x_k^2 \sum_{j=0}^{\frac{d}{2}-1}\sin(b^{-\frac{2j}{d}}\alpha).\]

Introducing $C^{ABF}_d = \sum_{j=0}^{\frac{d}{2}-1}\sin(b^{-\frac{2j}{d}}\beta^{-\frac{2j}{d}})$ and $C^{PI}_d =\sum_{j=0}^{\frac{d}{2}-1}\sin(b^{-\frac{2j}{d}}\alpha)$ proves the first part of the Theorem: 
\[\frac{\min_k x_k^2}{\|x\|^2} C^{ABF}_d \le \sin \angle (f^{RoPE+ABF}(x, n+1), f^{RoPE+ABF}(x, n)) \le \frac{\max_k x_k^2}{\|x\|^2} C^{ABF}_d,\]

\[\frac{\min_k x_k^2}{\|x\|^2} C^{PI}_d \le \sin \angle (f^{RoPE+PI}(x, n+1), f^{RoPE+PI}(x, n)) \le \frac{\max_k x_k^2}{\|x\|^2}C^{PI}_d.\]

Now, considering the limit of $C_d,$ we notice that  due to the inequalities on the arguments of the sines, the following bounds hold:

\[  (b\beta)^{-\frac{2j}{d}}\left(1-(b\beta)^{-\frac{2j}{d}}/\pi\right)\le\sin(b^{-\frac{2j}{d}}\beta^{-\frac{2j}{d}}) \le (b\beta)^{-\frac{2j}{d}},\]
\[  \alpha b^{-\frac{2j}{d}}\left(1-\alpha b^{-\frac{2j}{d}}/\pi\right)\le\sin(b^{-\frac{2j}{d}}\alpha) \le \alpha b^{-\frac{2j}{d}}\]

Using the formula of geometric sums and a corollary of the exponential (second) foundational limit, we establish the limits of the sums of these bounds as $d\to \infty$:
\[\sum_{j=0}^{\frac{d}{2}-1}\alpha b^{-\frac{2j}{d}} = \frac{\alpha(b-1)b^{2/d}}{b^{2/d+1}-b}\to \alpha \frac{b-1}{b\log b} \text{ as } d\to \infty\]
\[\sum_{j=0}^{\frac{d}{2}-1}\alpha^2 b^{-\frac{4j}{d}} = \frac{\alpha^2(b^2-1)b^{4/d}}{b^{4/d+2}-b^2} \to \alpha^2 \frac{b^2-1}{b^2\log b} \text{ as } d\to \infty \]
\[\sum_{j=0}^{\frac{d}{2}-1} (b\beta)^{-\frac{2j}{d}} = \frac{(b\beta-1)(b\beta)^{2/d}}{(b\beta)^{2/d+1}-b\beta} \to \frac{(b\beta)-1}{(b\beta)\log (b\beta)} \text{ as } d\to \infty  \]

\[\sum_{j=0}^{\frac{d}{2}-1} (b\beta)^{-\frac{4j}{d}} = \frac{(b^2\beta^2-1)(b\beta)^{4/d}}{(b\beta)^{4/d+2}-b^2\beta^2} \to \frac{(b\beta)^2-1}{(b\beta)^2\log (b\beta)} \text{ as } d\to \infty \]
Substituting these into the bounds on $\lim_{d\to\infty} C_d,$ one achieves:
\[  (\log b +\log \beta)^{-1} \left(\frac{(b\beta)-1}{(b\beta)} - \frac{(b\beta)^2-1}{\pi(b\beta)^2} \right ) \le \lim_{d\to\infty} C^{ABF}_d \le (\log b +\log \beta)^{-1}\frac{(b\beta)-1}{(b\beta)} ,\]
\[  \alpha (\log b)^{-1}\left(\frac{b-1}{b} - \frac{\alpha}{\pi} \frac{b^2-1}{b^2} \right) \le \lim_{d\to\infty} C^{PI}_d \le \alpha (\log b)^{-1} \frac{b-1}{b}\]
From these bounds, one can see that in the setting considered within this paper, where $b=10000$ and $\alpha < 1/4,$ the approximation of $\lim_{d\to \infty} C_d$ used in the statement of the Theorem is of a high quality.

\end{proof}

Based on this theoretical derivation, we return to the interpretation of our experimental resuts.
On one hand, the experiments have shown that the model can adapt to the new sequence length with both RoPE PI ($\alpha = 1/4$ or $\alpha=1/8$) and RoPE ABF ($\beta=50$). Thus, we can conclude that the chosen hyperparameters provide a sufficient degree of approximation of RoPE images under $b=10000.$ In other words, both $d(f,  f^{RoPE+ABF})$ and $d(f,  f^{RoPE+PI})$ are small enough to allow rapid adaptation. On the other hand, comparing the expressions of $C_d$ for RoPE ABF and RoPE PI, we can observe that for the values of $\alpha = \frac{1}{4}$ or $\alpha = \frac{1}{8}$  and $b=10000$ that were used in our experiments, the granularity (the distance between two consecutive images of RoPE) is much lower for the RoPE PI ($\alpha (\log b)^{-1} \approx 0.027$) than for RoPE ABF ($(\log b + \log \beta )^{-1} \approx 0.076$) with $\beta=50.$ We further hypothesise that the higher degree of granularity is related to the higher evaluation on the downstream tasks of the RoPE ABF variant compared to RoPE PI because it makes the task of distinguishing between the positional embedding images simpler for the model. In other words, this corresponds to the case of $q(f^{RoPE+ABF}) > q(f^{RoPE+PI}).$

Throughout this consideration we implicitly assumed that the distance between the consecutive images of an embedding is smaller than the distance between any other pair of the images. While this assumption is likely to hold true in a high-dimensional space, significantly increasing the parameter of $\beta$ in RoPE ABF may violate this assumption due to the changed geometry of the embedding curve.

\begin{figure}[h!]
\begin{adjustbox}{max width=1.0\linewidth, center}
    \begin{subfigure}{.7\linewidth}
        \centering
        \includegraphics[width=\linewidth]{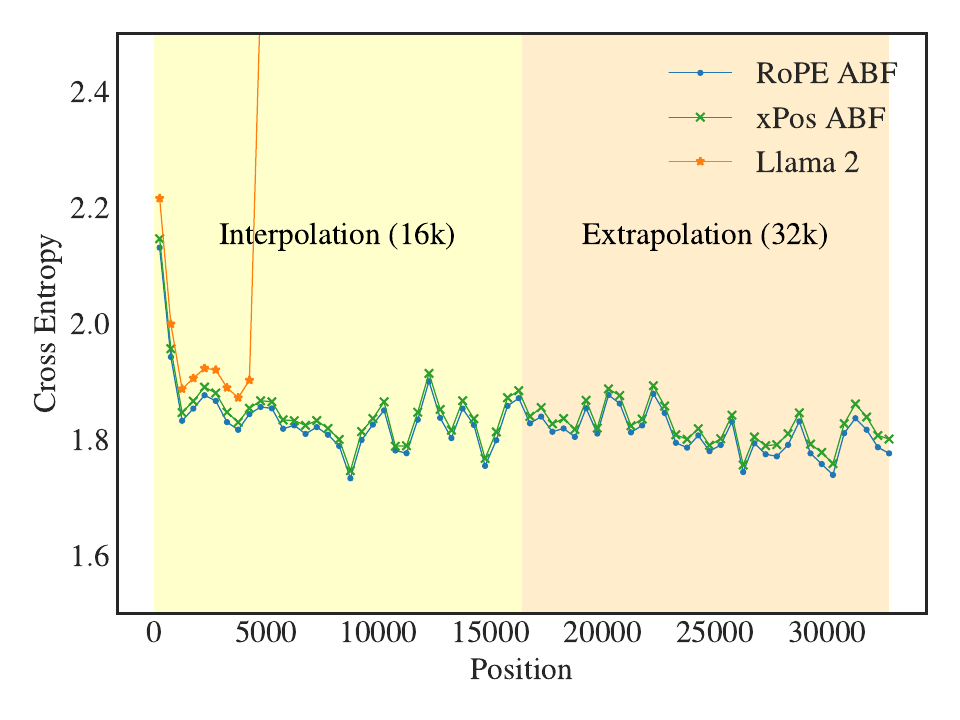}
        \caption{Validation loss calculated at each position of 32,768 context window. }
        \label{fig:loss_at_each_pos}
    \end{subfigure}
    \hspace{-10pt}
    \begin{subfigure}{.7\linewidth}
        \centering
        \includegraphics[width=\linewidth]{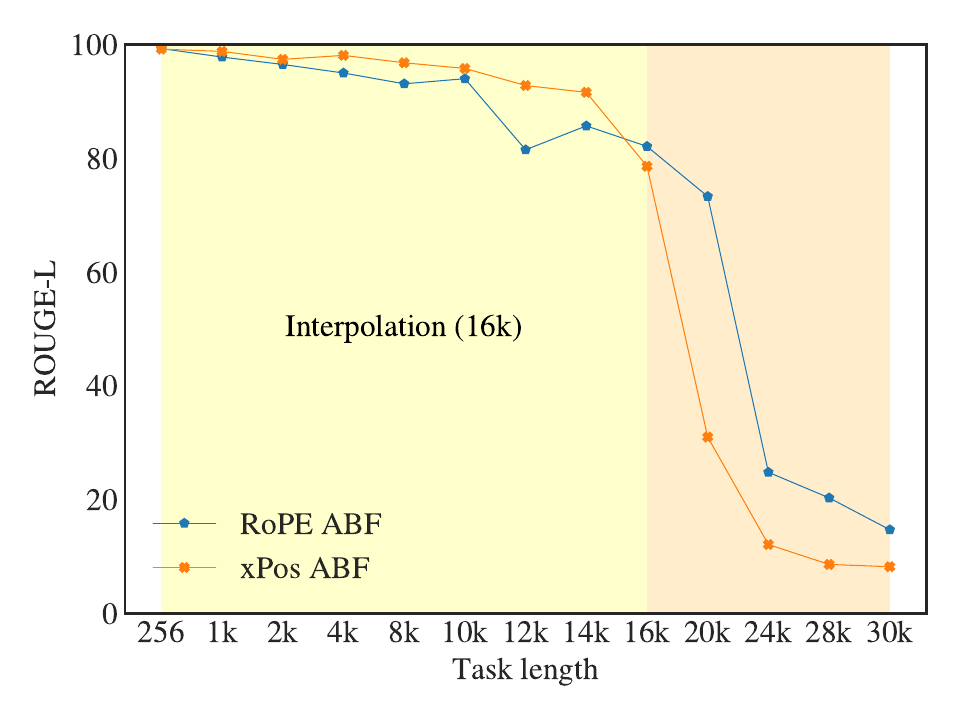}
        \caption{Context window probing with \textsc{first-sentence-retrieval} task. }
        \label{fig:get_1st_70B_extrapolation}
    \end{subfigure}
\end{adjustbox}
\caption{Evaluation on 
our 70B model's extrapolation capabilities. }
\end{figure}

\section{Length Extrapolation Results}
\label{app:extrapolation}
Despite not the focus of this work, extrapolation is an important property for long context models. Extrapolation refers to a model's ability to conduct inference on input sequences that are longer than its training sequences. We evaluate how 
our 70B model extrapolates with two tasks: 

\begin{itemize}
    \item \textbf{Validation loss at each position}: In Figure \ref{fig:loss_at_each_pos}, we visualize the average loss at each position of the 32,768 sequence length where the first 16,384 is the interpolation area (within training sequence length) and the second half is extrapolation. We use 50 batches of samples and average across them. To make plots smoother, we also take the mean of losses every 500 positions. As we can see, our 
    70B model with either \textsc{RoPE ABF} or \textsc{xPos ABF} maintain the loss in the extrapolation area. To contrast this, we also plot the result for \llamavtwo with 4,096 context window: the loss explodes after the position goes beyond training sequence length, which suggests that \llamavtwo does not extrapolate effectively. 

    \item \textbf{Synthetic \textsc{first-sentence-retrieval} task}: To complement validation loss evaluation, we also test 
    our 70B model with two different PEs on the context probing task. Unlike validation loss task where it is hard to find data samples that require very long range dependencies consistently, \textsc{first-sentence-retrieval} imposes a very strict requirement for models to attend with a specific length. In Figure \ref{fig:get_1st_70B_extrapolation}, we visualize the results up to 32,768 where we do see some performance degradation when the model needs to extrapolate. In addition, we observe that, despite often considered as having better extrapolation properties, \textsc{xPos ABF} does not outperform \textsc{RoPE ABF} in our setting. 
\end{itemize}

\section{Self-Instruct Data}
\label{sec:self_inst_appendix}

As described in Section \ref{sec:sft}, we use \chatllama to bootstrap self-instruct data for instruct finetuning. In this section we describe the detailed procedure as well as providing the necessary prompts used for generating this dataset. The main challenge is that we need an automated process of generating long context instruct data with only short context models at hand. The core idea behind this is to split the long documents into chunks of texts that can fit into short model's context and apply self-instruct. We focus primarily on question answering dataset. We first split the long document into smaller chunks, and for each chunk we construct a prompt as in Figure \ref{fig:qa_gen_prompts} which gets fed into \chatllama to get a question-answer pair. To diversify the question types, we randomly choose between the two prompts that ask for either normal or short answers. Once we extract the question and answer from the response (using tags as required by the prompt), we can construct long question answering instruct data together with the original long document, using the templates in Figure \ref{fig:qa_data_templates} of the corresponding answer type. 



\begin{figure}[ht]
\small
\noindent\rule{\textwidth}{1pt}
\textbf{Normal Answer Prompt:} 
\begin{verbatim}
[INST] You are given a text chunk (delimited by triple quotes) taken from a long 
text. Write a question about this text and provide the correct answer. The answer 
needs to be based on the text. This question will later be used as a reading 
comprehension test over the entire document. Wrap the question and answer using 
XML tags (<question> and </question>, <answer> and </answer>).
"""
{TEXT_CHUNK}
"""
[/INST]
\end{verbatim}
\noindent\rule{\textwidth}{0.5pt}
\textbf{Short Answer Prompt:} 
\begin{verbatim}
[INST] You are given a text chunk (delimited by triple quotes) from a long 
document. Based on information from the text, come up with a specific question 
**which can be answered in a few words or a single phrase** and provide the 
correct answer without explanation. The answer needs to be based on the text. 
This question will later be used as a reading comprehension test over the 
entire document. Wrap the question and answer using XML tags (<question> 
and </question>, <answer> and </answer>). Again, the answer needs to be short.
"""
{TEXT_CHUNK}
"""
[/INST]
\end{verbatim}
\noindent\rule{\textwidth}{1pt}
\caption{Prompts used for generating question and answer pairs by boostrapping \chatllama. We split the long documents into chunks and feed each chunk into one of the prompts with equal probability. We prompt the models to wrap the answer with XML tags, which enables more accurate answer extraction. }
\label{fig:qa_gen_prompts}
\end{figure}

\begin{figure}[ht]
\small
\noindent\rule{\textwidth}{1pt}
\textbf{Normal Answer Data Template:} 
\begin{verbatim}
[INST] You are given a long text (delimited by triple quotes) and a question. 
Read the text and answer the question in the end.
"""
{FULL_DOCUMENT}
"""
Question: {QUESTION} 
[/INST]
{ANSWER}
\end{verbatim}
\noindent\rule{\textwidth}{0.5pt}
\textbf{Short Answer Data Template:} 
\begin{verbatim}
[INST] You are given a long text (delimited by triple quotes) and a question. 
Read the text and answer the question in the end as concisely as you can, 
using a single phrase or sentence if possible. Do not provide any explanation.
"""
{FULL_DOCUMENT}
"""
Question: {QUESTION} 
[/INST]
{ANSWER}
\end{verbatim}
\noindent\rule{\textwidth}{1pt}
\caption{Data templates for constructing long question-answer data. The question and answer pair is extracted from the response of \chatllama.}
\label{fig:qa_data_templates}
\end{figure}

\end{document}